\definecolor{gray}{rgb}{0.5,0.5,0.5}
\newcommand{\mname}{LinSAT}
\theoremstyle{plain}
\newtheorem{theorem}{Theorem}[section]
\newtheorem{lemma}[theorem]{Lemma}
\newtheorem{corollary}[theorem]{Corollary}
\theoremstyle{definition}
\theoremstyle{remark}
\newcommand{\yanr}[1]{{#1}}
\DeclareMathOperator*{\argmin}{argmin}
\DeclareMathOperator*{\argmax}{argmax}
\icmltitlerunning{{\mname}Net: The Positive Linear Satisfiability Neural Networks}
\begin{document}

\twocolumn[
\icmltitle{{\mname}Net: The Positive Linear Satisfiability Neural Networks}



\begin{icmlauthorlist}
\icmlauthor{Runzhong Wang}{yyy,zzz}
\icmlauthor{Yunhao Zhang}{yyy}
\icmlauthor{Ziao Guo}{yyy}
\icmlauthor{Tianyi Chen}{yyy}
\icmlauthor{Xiaokang Yang}{yyy}
\icmlauthor{Junchi Yan}{yyy,zzz}
\end{icmlauthorlist}

\icmlaffiliation{yyy}{Department of Computer Science and Engineering, and MoE Key Lab of Artificial Intelligence, Shanghai Jiao Tong University}
\icmlaffiliation{zzz}{Shanghai AI Laboratory}

\icmlcorrespondingauthor{Junchi Yan}{yanjunchi@sjtu.edu.cn}

\icmlkeywords{Machine Learning, ICML}

\vskip 0.3in
]



\printAffiliationsAndNotice{}  
\begin{abstract}
Encoding constraints into neural networks is attractive. This paper studies how to introduce the popular positive linear satisfiability to neural networks. We propose the first differentiable satisfiability layer based on an extension of the classic Sinkhorn algorithm for jointly encoding multiple sets of marginal distributions. We further theoretically characterize the convergence property of the Sinkhorn algorithm for multiple marginals. In contrast to the sequential decision e.g.\ reinforcement learning-based solvers, we showcase our technique in solving constrained (specifically satisfiability) problems by one-shot neural networks, including i) a neural routing solver learned without supervision of optimal solutions; ii) a partial graph matching network handling graphs with unmatchable outliers on both sides; iii) a predictive network for financial portfolios with continuous constraints. To our knowledge, there exists no one-shot neural solver for these scenarios when they are formulated as satisfiability problems. Source code is available at \url{https://github.com/Thinklab-SJTU/LinSATNet}.

\end{abstract}

\section{Introduction}
\label{sec:intro}
 
It remains open for how to effectively encode the constraints into neural networks for decision-making beyond unconstrained regression and classification. Roughly speaking, we distinguish two categories of such constrained problems: \emph{optimization} and \emph{decision}. \emph{Optimization} problems consider explicit objective functions that are directly related to downstream tasks, whereby their optimization forms are usually more complicated. \emph{Decision} problems do not consider the objective of the downstream task, or the downstream task may not have any explicit objectives. It is possible that \emph{decision} problems also have underlying forms, however, their objectives are usually interpreted as ``finding a feasible solution nearest to the input''. 
In particular, the \emph{decision} problem can be divided into two cases: i) only judge if there exists a feasible solution or not; ii) output a feasible solution close to an unconstrained input. This paper focuses on the latter case for \emph{decision} problem, and we term it as \emph{satisfiability} problem if not otherwise specified.

\begin{table}[tb!]
    \centering
    \caption{Comparison of constraint-encoding for neural networks in finding a solution, with/without an explicit objective function. Note the other works on enforcing a certain kind of \emph{satisfiability} e.g. \emph{permute}, \emph{rank}, \emph{match}, (but not boolean-SAT) can be incorporated by the positive linear constraints as fulfilled by our {\mname}.}
    \resizebox{\linewidth}{!}{
    \begin{tabular}{r|ccc}
    \toprule
    Paper & Formulation & Constraint type & Exact gradient? \\
    \midrule
    \citet{AmosICML17} & \emph{optim.}  & linear & Yes \\
    \citet{PoganvcicICLR19} & \emph{optim.} & combinatorial & No \\
    \citet{BerthetNIPS20} & \emph{optim.} & combinatorial & No \\
    \citet{WangICML19} & \emph{optim.} & combinatorial & Yes \\
    \midrule
     \citet{NeuroSAT19} & \emph{sat.}  & boolean-SAT& Yes \\
    \citet{CruzCVPR17} & \emph{sat.} & permutation & Yes \\
    \citet{AdamsArxiv11} & \emph{sat.}  & ranking & Yes \\
    \citet{WangICCV19} & \emph{sat.}  & matching& Yes \\
    {\mname} (ours) & \emph{sat.} & positive linear & Yes \\
    \bottomrule
    \end{tabular}
    }
    \vspace{-10pt}
    \label{tab:compare}
\end{table}

\begin{figure*}[t!]
    \centering
    \includegraphics[width=0.9\textwidth]{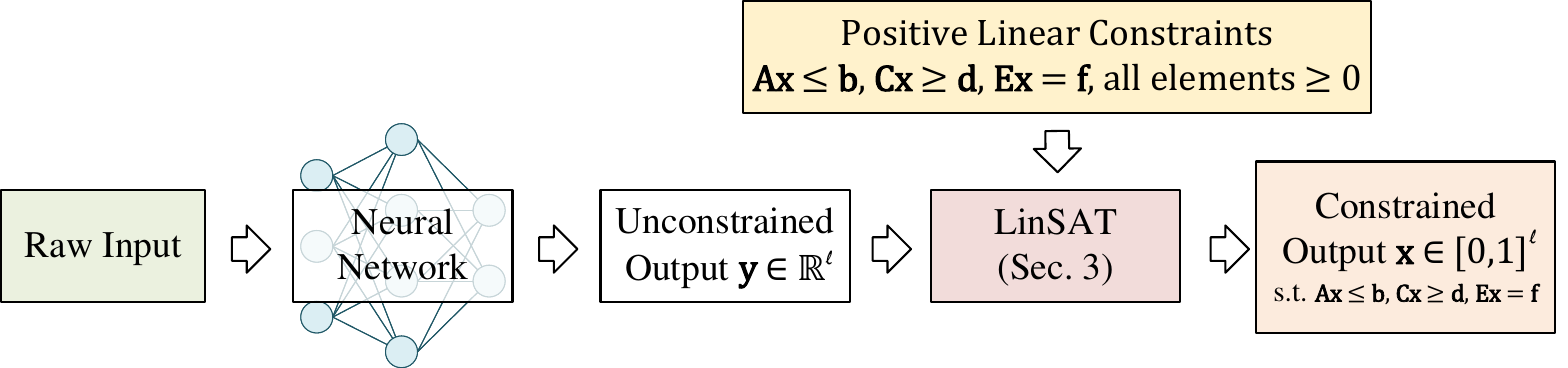}
    \caption{The use case of {\mname}Net. If the last layer of the neural network is a linear layer whose output is unconstrained, {\mname} enforces the \emph{satisfiability} of positive linear constraints to the final output.}
    \vspace{-10pt}
    \label{fig:usecase}
\end{figure*}

Notably, machine learning has been well adopted in solving both optimization and decision problems, especially for combinatorial optimization (CO)~\cite{bengio2020machine} and SAT problem~\cite{GuoSATMIR23,LiKDD23}. It is relatively easy to introduce learning into problem-solving as a building block under the traditional solving framework~\cite{WangCVPR21,WangNIPS21}, yet it is more attractive to develop a learning-based framework in a more systematic manner. In this regard, reinforcement learning (RL)~\cite{LiuICLR23} or alternative sequence-to-sequence models~\cite{VinyalsNIPS15} that solve the problem in an auto-regressive way is of prominence adoption, while they are often less efficient for their sequential decision nature. Thus efforts have also been put into one-shot problem solving, and a popular alternative is designing certain penalties in the loss~\citep{KaraliasNIPS20} to respect the constraints. Being more thought-provoking, a more aggressive ambition is to develop end-to-end differentiable neural networks whereby the constraints are seamlessly encoded in their architecture, such that the efficiency of neural networks for one-shot solving can be fulfilled.

As shown in \cref{tab:compare}, a number of special constraints have been successfully encoded in neural networks by designing a certain layer for end-to-end training and solving, and as will be shown later, our proposed {\mname}Net (i.e.\ Positive Linear Satisfiability Network) can incorporate the other studied constraints for satisfiability problem and our method is end-to-end differentiable with exact gradient computing. As a side note of \cref{tab:compare}, there exist methods for general combinatorial optimization, e.g.\ \citet{PoganvcicICLR19,PaulusICML21} over inner-product objectives, while it is often at the cost of being inaccessible to the exact gradient in model training which can potentially hurt the performance.

Our work is technically inspired by the recent success of enforcing the satisfiability of certain constraints to neural networks by deliberately handcrafted layers and in particular the classic Sinkhorn algorithm~\citep{sinkhorn1967concerning}. The Sinkhorn algorithm has been a popular technique in recent constrained neural networks: 1)~the permutation constraint to solve jigsaw puzzles by \citet{CruzCVPR17}; 2)~the ranking constraint by \citet{AdamsArxiv11,CuturiNIPS19} and the cardinality (top-$k$) constraint by \citet{XieNIPS20,WangICLR23}; 3)~the one-to-one correspondence constraint for graph matching (with outliers at most in one graph) by \citet{WangPAMI20,YuICLR20,WangPAMI22}. In this paper, we extend the scope of Sinkhorn algorithm to enforce the satisfiability of the more general positive linear constraints~\citep{LubySTOC93}. Denote $\mathbf{x}\in[0,1]^l$ as the $l$-dim output, our proposed {\mname} layer jointly enforces the following constraints:
\begin{equation}
    \mathbf{A}\mathbf{x}\leq \mathbf{b}, \mathbf{C}\mathbf{x}\geq \mathbf{d}, \mathbf{E}\mathbf{x}= \mathbf{f}, \mathbf{x}\in[0,1]^l
    \label{eq:positive-linear-constraint}
\end{equation}
where all elements in $\mathbf{A}, \mathbf{b}, \mathbf{C}, \mathbf{d}, \mathbf{E}, \mathbf{f}$ are non-negative. In fact, this family of constraints incorporates a large scope of real-world scenarios such as the ``packing'' constraints ($\mathbf{A}\mathbf{x}\leq \mathbf{b}$) and the ``covering'' constraints ($\mathbf{C}\mathbf{x}\geq \mathbf{d}$), whereby the aforementioned constraints 1)-3) could be viewed as subsets of Eq.~(\ref{eq:positive-linear-constraint}). The use case of such a positive linear constrained network is illustrated in \cref{fig:usecase}.

In this paper, we first generalize Sinkhorn algorithm to handle multiple sets of marginal distributions. Our multi-set version follows the classic single-set algorithm that is non-parametric (i.e.\ without trainable parameters for a neural network) and involves only matrix-vector arithmetic operations for exact gradient computing and back-propagation. We theoretically characterize the convergence guarantee and its rate of the proposed algorithm concerning the KL divergence and the $L_1$ distance to the target marginal distributions. We further show that any positive linear constraints could be equivalently written as multiple sets of marginal distributions, such that the satisfiability of positive linear constraints could be enforced to a differentiable network. \textbf{The contributions of the paper are:}

1) We generalize the Sinkhorn algorithm to handle multiple sets of marginal distributions, with the theoretical guarantee that the proposed multi-set algorithm preserves the convergence of the single-set version. Our multi-set algorithm offers theoretical and technical groundings for handling the general positive linear constraints. It may also be of independent interest to the area of matrix normalization.

2) We design {\mname}, a differentiable yet parameter-free light-weighted layer to encode the positive linear constraints, based on our devised multi-set Sinkhorn algorithm. The satisfiability layer only involves matrix-vector arithmetic operations and can be strictly enforced, \yanr{decoupling the constraint satisfaction from the learning objective}. To our best knowledge, this is the first differentiable satisfiability layer addressing the general positive linear constraints.

3) To demonstrate its wide applicability, we deploy our {\mname} to three scenarios regarding constrained routing, outlier-aware matching, and predictive portfolio allocation. In these cases, an explicit objective function is difficult to define and a satisfiable solution is the purpose.

\section{Related Work}
\label{sec:related}

\textbf{Sinkhorn Algorithm}  \citep{sinkhorn1967concerning} projects a positive matrix to a doubly-stochastic matrix by alternatively normalizing its rows and columns, and \citet{CuturiNIPS13} identifies the connection of Sinkhorn and optimal transport. Its effectiveness also motivates recent theoretical studies concerning its convergence rate~\citep{AltschulerNIPS17,knight2008sinkhorn}, whereby \citet{chakrabarty2021better} offers a comprehensive theory. Due to its differentiability, Sinkhorn is widely applied in vision~\citep{CruzCVPR17,WangICCV19} and learning~\citep{AdamsArxiv11,XieNIPS20} by enforcing specific constraints. However, as summarized in \cref{tab:compare}, the types of constraints studied in the previous works are less general than the positive linear constraints studied in this paper. Our paper also differs from the existing study of multi-marginal optimal transport \citep{pass2015multi} since their ``multi-marginal'' means moving one source distribution to multiple targets, while we are moving multiple sources to multiple targets. To distinguish, we name our algorithm as ``Sinkhorn for multi-\emph{set} marginals''.

\textbf{Approximate Solvers for Positive Linear Programming} 
is also an active topic in theoretical computer science. Despite their solid theoretical groundings, this line of works may not be readily integrated into neural networks. For example, \citet{awerbuch2008stateless,allen2014using} are non-differentiable because their algorithms involve max operations and thresholding functions, respectively. \citet{young2001sequential,LubySTOC93} are neither differentiable due to their incremental steps. Another drawback of these methods is that most of them cannot handle a mix of packing ($\mathbf{A}\mathbf{x}\leq \mathbf{b}$) and covering ($\mathbf{C}\mathbf{x}\geq \mathbf{d}$) constraints except for \citet{young2001sequential}. In this paper, we emphasize differentiability to make it compatible with neural networks, and our method could handle any combinations of packing, covering, and equality constraints. 

\textbf{Differentiable Solvers for Constrained Optimization} address the problem with objective functions and constraints whereby deep graph matching~\cite{YanIJCAI20,YuICLR20} has been a prominent topic with a quadratic objective. \citet{AmosICML17} shows the differentiability at optimal solutions via KKT conditions and presents a case study for quadratic programming. \citet{WangICML19} approximately solves MAXSAT by a differentiable semi-definitive solver. Another line of works develops approximate gradient wrappers for combinatorial solvers: \citet{PoganvcicICLR19} estimates the gradient by the difference of two forward passes; \citet{BerthetNIPS20} estimates the gradient via a batch of random perturbations.

Our approach is devoted to the \emph{satisfiability} setting whereby no explicit objective function is given for the downstream task \citep{NeuroSAT19}. Note that this is more than just a mathematical assumption: in reality, many problems cannot be  defined with an explicit objective function, either due to e.g.\ the missing of some key variables in noisy or dynamic environments, especially when the objective concerns with a future outcome as will be shown in case studies on partial graph matching (Sec.~\ref{sec:gm}) and predictive portfolio allocation (Sec.~\ref{sec:portfolio}). However, existing neural networks for \emph{optimization} (e.g.\ \citet{butler2021integrating} for asset allocation) do not adapt smoothly to these realistic scenarios.

Finally, note that the boolean satisfiability problem~\cite{CookSTOC71} also receives attention from machine learning community~\cite{GuoSATMIR23}, whereby end-to-end neural nets have also been actively developed e.g.\ NeuroSAT~\cite{NeuroSAT19} and QuerySAT~\cite{QuerySAT21}. As we mentioned in~\cref{tab:compare}, the boolean-SAT cannot be covered by our constraint and is orthogonal to this work.

\begin{algorithm}[tb!]
   \caption{Sinkhorn for Single-Set Marginals (Classic)}
   \label{alg:sk}
\begin{algorithmic}[1]
   \STATE {\bfseries Input:} score matrix $\mathbf{S}\in\mathbb{R}_{\geq 0}^{m\times n}$, single set of marginal distributions $\mathbf{v}\in\mathbb{R}_{\geq 0}^{m}, \mathbf{u}\in\mathbb{R}_{\geq 0}^{n}$.
   \STATE Initialize $\Gamma_{i,j}=\frac{s_{i,j}}{\sum_{i=1}^m s_{i,j}}$;
   \REPEAT
   \STATE ${\Gamma}_{i,j}^{\prime} = \frac{{\Gamma}_{i,j}v_{i}}{\sum_{j=1}^n {\Gamma}_{i,j}u_{j}}$; $\triangleright$ normalize w.r.t.\ $\mathbf{v}$
   \STATE ${\Gamma}_{i,j} = \frac{{\Gamma}_{i,j}^{\prime}u_j}{\sum_{i=1}^m {\Gamma}_{i,j}^{\prime}u_j}$; $\triangleright$ normalize w.r.t.\ $\mathbf{u}$
   \UNTIL{convergence}
\end{algorithmic}
\end{algorithm}

\section{Methodology}
\label{sec:method}
Sec.~\ref{sec:method-preliminary-sinkhorn} formulates  the classic Sinkhorn algorithm handling a single set of marginal distributions. Sec.~\ref{sec:extended-sk} proposes the generalized multi-set Sinkhorn with a convergence study. In Sec.~\ref{sec:linsat} we devise {\mname} layer to enforce the positive linear constraints, by connecting to the marginal distributions.

\subsection{Preliminaries: The Classic Sinkhorn Algorithm for Single Set of Marginal Distributions}
\label{sec:method-preliminary-sinkhorn}
We first revisit the classic Sinkhorn algorithm in \cref{alg:sk}, which is a differentiable method developed by \citet{sinkhorn1967concerning} to enforce a single set of marginal distributions to a matrix. 
Given non-negative score matrix $\mathbf{S}\in\mathbb{R}_{\geq 0}^{m\times n}$ and a set of marginal distributions on rows $\mathbf{v}\in \mathbb{R}_{\geq 0}^m$ and columns $\mathbf{u} \in \mathbb{R}_{\geq 0}^n$ (where $\sum_{i=1}^m v_i = \sum_{j=1}^n u_j = h$), the Sinkhorn algorithm outputs a normalized matrix $\mathbf{\Gamma}\in[0,1]^{m\times n}$ so that $\sum_{i=1}^m \Gamma_{i,j}u_{j}=u_j, \sum_{j=1}^n \Gamma_{i,j}u_{j}=v_i$. Conceptually, $\Gamma_{i,j}$ means the proportion of $u_j$ moved to $v_i$. Note that $\Gamma_{i,j}$ usually has no same meaning in the ``reversed move'' from $v_i$ to $u_j$ if $v_i\neq u_j$\footnote{This formulation is modified from the conventional formulation where $\Gamma_{i,j}u_j$ is equivalent to the elements in the ``transport'' matrix in \citet{CuturiNIPS13}. We prefer this formulation as it seamlessly generalizes to multi-set marginals. See Appendix~\ref{sec:discuss_with_classic_sinkhorn} for details.}.
We initialize $\mathbf{\Gamma}^{(0)}$ by 
\begin{equation}
    \Gamma_{i,j}^{(0)}=\frac{s_{i,j}}{\sum_{i=1}^m s_{i,j}}.
\end{equation}
At iteration $t$, $\mathbf{\Gamma}^{\prime(t)}$ is obtained by normalizing w.r.t.\ the row-distributions $\mathbf{v}$, and $\mathbf{\Gamma}^{(t+1)}$ is obtained by normalizing w.r.t\ the column-distributions $\mathbf{u}$. $\mathbf{\Gamma}^{(t)}, \mathbf{\Gamma}^{\prime(t)} \in [0,1]^{m\times n}$ are scaled by $\mathbf{u}$ before and after normalization. Specifically,
\begin{equation}
    {\Gamma}_{i,j}^{\prime(t)} = \frac{{\Gamma}_{i,j}^{(t)}v_i}{\sum_{j=1}^n {\Gamma}_{i,j}^{(t)}u_j},\quad  {\Gamma}_{i,j}^{(t+1)} = \frac{{\Gamma}_{i,j}^{\prime(t)}u_{j}}{\sum_{i=1}^m {\Gamma}_{i,j}^{\prime(t)}u_{j}}.
    \label{eq:sk}
\end{equation}
The above algorithm is easy to implement and GPU-friendly. Besides, it only involves matrix-vector arithmetic operations, meaning that it is naturally differentiable and the backward pass is easy to implement by the autograd feature of modern deep learning frameworks~\citep{Paszke2017AutomaticPytorch}.

Some recent theoretical studies \citep{AltschulerNIPS17,chakrabarty2021better} further characterize the rate of convergence of Sinkhorn algorithm. Define the $L_1$ error as the violation of the marginal distributions,
\begin{subequations}
\begin{align}
    L_{1}(\mathbf{\Gamma}^{(t)})&=\| \mathbf{v}^{(t)} -\mathbf{v}\|_1, \ v^{(t)}_i=\sum_{j=1}^n {\Gamma}^{(t)}_{i,j}{u}_j, \\
    L_{1}(\mathbf{\Gamma}^{\prime(t)})&=\|\mathbf{u}^{(t)}-\mathbf{u}\|_1,\  u^{(t)}_j=\sum_{i=1}^m {\Gamma}^{\prime(t)}_{i,j}{u}_j .
\end{align}
\end{subequations}

\begin{theorem}[See \citet{chakrabarty2021better} for the proof]
\label{thm:sk}
For any $\epsilon>0$, the Sinkhorn algorithm for single-set marginals returns a matrix $\mathbf{\Gamma}^{(t)}$ or $\mathbf{\Gamma}^{\prime(t)}$ with $L_1$ error $\leq \epsilon$ in time $t=\mathcal{O}\left(\frac{h^2\log(\Delta/\alpha)}{\epsilon^2}\right)$, where $\alpha={\min_{i,j:s_{i,j}>0} s_{i,j}}/{\max_{i,j} s_{i,j}}$, $\Delta=\max_j \left|\{i: s_{i,j} > 0\}\right|$ is the max number of non-zeros in any column of $\mathbf{S}$, and recall that $\sum_{i=1}^m v_i = \sum_{j=1}^n u_j = h$.
\end{theorem}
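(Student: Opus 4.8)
The plan is to recast the Sinkhorn iteration as exact block-coordinate descent on a convex dual potential and to track its decrease per iteration, following the potential-function strategy behind \citet{chakrabarty2021better,AltschulerNIPS17}. Writing each scaled iterate in the form $s_{i,j}\,e^{x_i+y_j}$ (with $s_{i,j}$ absorbing the paper's initialization and the column scaling by $u_j$), I would work with the objective $f(\mathbf{x},\mathbf{y}) = \sum_{i,j} s_{i,j}e^{x_i+y_j} - \langle \mathbf{v}, \mathbf{x}\rangle - \langle \mathbf{u}, \mathbf{y}\rangle$. The row-normalization step of \cref{alg:sk} is then precisely the exact minimization of $f$ over $\mathbf{x}$ (it forces each row sum to $v_i$), and the column-normalization step is the exact minimization over $\mathbf{y}$; hence each half-iteration strictly decreases $f$.

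\textbf{Key step (per-iteration progress).} I would show that the decrease produced by one exact row (resp.\ column) update equals the generalized KL divergence of the target marginal from the current marginal. Concretely, with current row sums $v^{(t)}_i=\sum_j \Gamma^{(t)}_{i,j}u_j$, the optimality condition $x_i^\star - x_i = \log(v_i/v^{(t)}_i)$ gives $f(\text{before}) - f(\text{after}) = \sum_i\big(v^{(t)}_i - v_i + v_i\log(v_i/v^{(t)}_i)\big) = \mathrm{KL}(\mathbf{v}\,\|\,\mathbf{v}^{(t)})$. Since both $\mathbf{v}$ and $\mathbf{v}^{(t)}$ carry total mass $h$, the generalized Pinsker inequality yields $\mathrm{KL}(\mathbf{v}\,\|\,\mathbf{v}^{(t)}) \geq \frac{1}{2h}\,\|\mathbf{v}^{(t)}-\mathbf{v}\|_1^2 = \frac{1}{2h}L_1(\mathbf{\Gamma}^{(t)})^2$, and symmetrically for the column update with $L_1(\mathbf{\Gamma}^{\prime(t)})$.

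\textbf{Stopping / telescoping argument.} As long as no iterate has reached $L_1$ error $\leq \epsilon$, every half-iteration decreases $f$ by at least $\epsilon^2/(2h)$. Because $f$ is bounded below by its minimizer value $f^\star$, telescoping over $t$ iterations gives $t \leq 2h\,(f^{(0)}-f^\star)/\epsilon^2$. It then remains to bound the initial suboptimality gap: establishing $f^{(0)}-f^\star = \mathcal{O}(h\log(\Delta/\alpha))$ and substituting it yields exactly $t=\mathcal{O}(h^2\log(\Delta/\alpha)/\epsilon^2)$.

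\textbf{Main obstacle.} The delicate part is the bound $f^{(0)}-f^\star=\mathcal{O}(h\log(\Delta/\alpha))$. After the column-normalizing initialization $\Gamma^{(0)}_{i,j}=s_{i,j}/\sum_i s_{i,j}$, the nonzero entries lie in a range controlled by the entrywise ratio $\alpha$ and the per-column sparsity $\Delta$, so $-\log$ of any nonzero initial entry is $\mathcal{O}(\log(\Delta/\alpha))$; the challenge is to convert this entrywise range into a clean upper bound on $f^{(0)}$ together with a matching lower bound on $f^\star$ (the latter requiring a feasible primal/dual witness), while keeping the total-mass factor $h$ explicit. The Pinsker step also needs care, as the marginals are measures of mass $h$ rather than probability vectors — this is precisely what produces the extra factor of $h$, so that the two $h$'s combine into the $h^2$ in the numerator.
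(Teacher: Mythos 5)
Your proposal is correct, and it is the same potential-function argument as the paper's, but run in a different coordinate system. Note first that the paper does not prove \cref{thm:sk} itself (it defers to \citet{chakrabarty2021better}); its own proof, in \cref{sec:detailed_proof}, is of the multi-set generalization \cref{thm:extended-sk}, which specializes to \cref{thm:sk} at $k=1$. That proof works in the \emph{primal}: it tracks the KL divergence $D(\mathbf{Z},\mathbf{\Gamma},\eta)$ from a fixed feasible matrix $\mathbf{Z}$ to the iterate, shows each half-iteration decreases it by exactly the normalized KL divergence of the target marginal from the current marginal (\cref{lemma:kl_diff}), bounds the initial potential by $\log(1+2\Delta/\alpha)$ via the entrywise bound $\Gamma^{(0)}_{\min}\geq\alpha/\Delta$ (\cref{lemma:kl_gamma0}), and closes with non-negativity of KL plus Pinsker. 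Your dual objective is the same Lyapunov function in disguise: writing $\Gamma_{i,j}=s_{i,j}e^{x_i+y_j}$ and expanding the (generalized) KL divergence gives $\mathrm{KL}\left(\mathbf{Z}\,\|\,\mathbf{\Gamma}\right)=f(\mathbf{x},\mathbf{y})+\sum_{i,j}z_{i,j}\log\left(z_{i,j}/s_{i,j}\right)-h$, i.e.\ the two potentials differ by a constant independent of $(\mathbf{x},\mathbf{y})$, which is why your per-step decrease identity coincides with \cref{lemma:kl_diff} (yours is $h$ times the paper's normalized version, and your mass-$h$ Pinsker constant $1/(2h)$ matches the paper's $1/(2h^2)$ after normalization, so both land on $h^2/\epsilon^2$). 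The practical difference is where the boundedness comes from: your dual framing requires both an upper bound on $f^{(0)}$ and a lower bound on $f^\star$, which you correctly flag as the main obstacle; the paper's primal potential is non-negative by construction, so only the initial bound is needed. In fact the identity above dissolves your obstacle in one line: $f^{(0)}-f^\star\leq \mathrm{KL}\left(\mathbf{Z}\,\|\,\mathbf{\Gamma}^{(0)}\right)\leq h\log\left(1+2\Delta/\alpha\right)$, with the feasible $\mathbf{Z}$ (whose existence is the scalability assumption made explicit in Eq.~(\ref{eq:define-z}) for the multi-set case, and implicit in \cref{thm:sk}) serving as the primal witness you were looking for --- no construction of a dual-optimal point is needed. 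What each framing buys: yours makes the exact block-coordinate-descent structure and convexity explicit (closer in spirit to \citet{AltschulerNIPS17}), while the paper's primal KL view makes the initial-gap bound and the termination-by-contradiction argument essentially one-liners, and is the form that extends verbatim to $k$ sets of marginals by cycling the index $\eta$.
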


\subsection{Generalizing Sinkhorn Algorithm for Multiple Sets of Marginal Distributions}
\label{sec:extended-sk}
Existing literature about the Sinkhorn algorithm mainly focuses on a single set of marginal distributions. In the following, we present our approach that extends the Sinkhorn algorithm into multiple sets of marginal distributions.

Following \citet{CuturiNIPS13}, we view the Sinkhorn algorithm as ``moving masses'' between marginal distributions: $\Gamma_{i,j}\in[0,1]$ means the proportion of $u_i$ moved to $v_j$. Interestingly, it yields the same formulation if we simply replace $\mathbf{u},\mathbf{v}$ by another set of marginal distributions, suggesting the potential of extending the Sinkhorn algorithm to multiple sets of marginal distributions. To this end, we devise \cref{alg:extended-sk}, an extended version of the Sinkhorn algorithm, whereby $k$ sets of marginal distributions are jointly enforced to fit more complicated real-world scenarios. The sets of marginal distributions are $\mathbf{u}_\eta\in \mathbb{R}_{\geq 0}^n, \mathbf{v}_\eta\in \mathbb{R}_{\geq 0}^m$, and we have:
\begin{equation}
    \forall \eta\in \{1, \cdots,k\}: \sum_{i=1}^m v_{\eta,i}=\sum_{j=1}^n u_{\eta,j}=h_\eta.
\end{equation}
It assumes the existence of a normalized $\mathbf{Z} \in [0,1]^{m\times n}$ s.t.
\begin{equation}
   \small{ \forall \eta\in \{1,\cdots, k\}: \sum_{i=1}^m z_{i,j} u_{\eta,j}=u_{\eta,j}, \sum_{j=1}^n z_{i,j} u_{\eta,j}=v_{\eta,i},}
    \label{eq:define-z}
\end{equation}
i.e., the multiple sets of marginal distributions have a non-empty feasible region (see \cref{sec:feasibility-assump} for details). Multiple sets of marginal distributions could be jointly enforced by traversing the Sinkhorn iterations over $k$ sets of marginal distributions. We extend Eq.~(\ref{eq:sk}) for multiple marginals,
\begin{equation}
    {\Gamma}_{i,j}^{\prime(t)} = \frac{{\Gamma}_{i,j}^{(t)}v_{\eta,i}}{\sum_{j=1}^n {\Gamma}_{i,j}^{(t)}u_{\eta,j}},\  {\Gamma}_{i,j}^{(t+1)} = \frac{{\Gamma}_{i,j}^{\prime(t)}u_{\eta,j}}{\sum_{i=1}^m {\Gamma}_{i,j}^{\prime(t)}u_{\eta,j}},
\end{equation}
where $\eta=(t\mod k)+1$ is the index of marginal sets. Similarly to \cref{alg:sk}, this generalized Sinkhorn algorithm finds a normalized matrix that is close to $\mathbf{S}$.

\begin{algorithm}[tb]
   \caption{Sinkhorn for Multi-Set Marginals (Proposed)}
   \label{alg:extended-sk}
\begin{algorithmic}[1]
   \STATE {\bfseries Input:} score matrix $\mathbf{S}\in\mathbb{R}_{\geq 0}^{m\times n}$, $k$ sets of marginal distributions $\mathbf{V}\in\mathbb{R}_{\geq 0}^{k\times m},\mathbf{U}\in\mathbb{R}_{\geq 0}^{k\times n}$.
   \STATE Initialize $\Gamma_{i,j}=\frac{s_{i,j}}{\sum_{i=1}^m s_{i,j}}$;
   \REPEAT
   \FOR{$\eta=1$ {\bfseries to} $k$}
   \STATE ${\Gamma}_{i,j}^{\prime} = \frac{{\Gamma}_{i,j}v_{\eta,i}}{\sum_{j=1}^n {\Gamma}_{i,j}u_{\eta,j}}$; $\triangleright$ normalize w.r.t.\ $\mathbf{v}_\eta$
   \STATE ${\Gamma}_{i,j} = \frac{{\Gamma}_{i,j}^{\prime}u_{\eta,j}}{\sum_{i=1}^m {\Gamma}_{i,j}^{\prime}u_{\eta,j}}$; $\triangleright$ normalize w.r.t.\ $\mathbf{u}_\eta$
   \ENDFOR
   \UNTIL{convergence}
\end{algorithmic}
\end{algorithm}

\textbf{Theoretical Characterization of the Convergence of Multi-set Sinkhorn}.
In the following, we show that our proposed \cref{alg:extended-sk} shares a similar convergence pattern with \cref{alg:sk} and \cref{thm:sk}. We generalize the theoretical steps in \citet{chakrabarty2021better} as follows.

We first study the convergence property of \cref{alg:extended-sk} in terms of Kullback-Leibler (KL) divergence. In the following, we have $\eta = (t\mod k)+1$ unless otherwise specified. We define the probability over marginals $\pi_{v_{\eta,i}} = v_{\eta,i} / h_\eta$, and similarly for $\pi_{u_{\eta,j}}$. $\mathbf{v}^{(t)}_\eta, \mathbf{u}^{(t)}_\eta$ are the $\eta$-th marginal distributions achieved by $\mathbf{\Gamma}^{(t)}$ and $\mathbf{\Gamma}^{\prime(t)}$, respectively,
\begin{equation}
    {v}^{(t)}_{\eta,i} = \sum_{j=1}^n {\Gamma}_{i,j}^{(t)}u_{\eta,j}, \ {u}^{(t)}_{\eta,j} = \sum_{i=1}^m {\Gamma}_{i,j}^{\prime(t)}u_{\eta,j}.
\end{equation}
$D_{\text{KL}}(\pi_{\mathbf{v}_{\eta}}||\pi_{\mathbf{v}_{\eta}^{(t)}})=\sum_{i=1}^m\frac{{v}_{\eta,i}}{h_\eta} \cdot \log \frac{{{v}_{\eta,i}}/{h_\eta}}{{{v}^{(t)}_{\eta,i}}/{h_\eta}}$ denotes the KL divergence between the current marginal achieved by $\mathbf{\Gamma}^{(t)}$ and the target marginal distribution. Similarly, we define $D_{\text{KL}}(\pi_{\mathbf{u}_{\eta}}||\pi_{\mathbf{u}_{\eta}^{(t)}})=\sum_{j=1}^n\frac{{u}_{\eta,j}}{h_\eta} \cdot \log \frac{{{u}_{\eta,j}}/{h_\eta}}{{{u}^{(t)}_{\eta,j}}/{h_\eta}}$ for $\mathbf{\Gamma}^{\prime(t)}$.

\begin{theorem}[Converge Rate for KL divergence]
\label{thm:extended-sk}
For any $\delta>0$, the Sinkhorn algorithm for multi-set marginals returns a matrix $\mathbf{\Gamma}^{(t)}$ or $\mathbf{\Gamma}^{\prime(t)}$ with KL divergence $\leq\delta$ in time $t=\mathcal{O}\left(\frac{k\log(\Delta/\alpha)}{\delta}\right)$, where $\alpha={\min_{i,j:s_{i,j}>0} s_{i,j}}/{\max_{i,j} s_{i,j}}$, $\Delta=\max_j \left|\{i: s_{i,j} > 0\}\right|$ is the max number of non-zeros in any column of $\mathbf{S}$, and recall that $k$ is the number of marginal sets.
\end{theorem}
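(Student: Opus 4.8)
The plan is to adapt the KL-divergence potential argument of \citet{chakrabarty2021better} from the single-set to the multi-set setting, treating the $k$ interleaved normalizations as a single cyclic sweep and tracking the progress of a common potential function across the sweep. First I would fix a feasible target matrix $\mathbf{Z}$ (guaranteed by the assumption in Eq.~(\ref{eq:define-z})) and use it as a reference point, defining a potential such as the KL divergence $D_{\text{KL}}(\mathbf{Z} \,\|\, \mathbf{\Gamma}^{(t)})$ between the feasible $\mathbf{Z}$ and the current iterate, taken with the appropriate $\mathbf{u}_\eta$-weighting that matches the modified formulation in the footnote. The key structural observation is that each normalization step (whether w.r.t.\ $\mathbf{v}_\eta$ or $\mathbf{u}_\eta$) is a projection that multiplies entire rows or columns by positive scalars, so the potential decreases monotonically and its per-step decrease can be lower-bounded by a Pinsker-type inequality in terms of the KL divergence $D_{\text{KL}}(\pi_{\mathbf{v}_\eta} \,\|\, \pi_{\mathbf{v}_\eta^{(t)}})$ to the marginal currently being corrected.

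The core of the argument is a telescoping bound. I would show that summing the per-step potential drops over all iterations gives a finite total bounded by the initial potential, which I would bound by $\mathcal{O}(\log(\Delta/\alpha))$ using the initialization $\Gamma^{(0)}_{i,j} = s_{i,j}/\sum_i s_{i,j}$ together with the definitions of $\alpha$ and $\Delta$. Concretely, if each step reduces the potential by at least a constant multiple of the current marginal KL divergence, then after $t$ steps the minimum KL divergence seen so far must be at most (initial potential)$/t$. Since in the multi-set version each set $\eta$ is visited once every $k$ iterations, the effective rate for any single set picks up the factor $k$, yielding the bound $t = \mathcal{O}\!\left(\frac{k\log(\Delta/\alpha)}{\delta}\right)$ once we require the minimal KL divergence to fall below $\delta$. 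I would make precise that the guarantee is an existence statement over the iterates (``returns a matrix $\mathbf{\Gamma}^{(t)}$ or $\mathbf{\Gamma}^{\prime(t)}$''), matching the phrasing of \cref{thm:sk}, rather than a claim that every iterate is good.

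The main obstacle I anticipate is handling the interaction between the $k$ different marginal sets within a single potential argument. In the single-set proof the alternating row/column normalizations both push toward the \emph{same} coupling polytope, so the potential w.r.t.\ the target $\mathbf{Z}$ is non-increasing at every step. With $k$ sets, correcting set $\eta$ may temporarily worsen the marginals of set $\eta' \neq \eta$, so I must verify that the potential $D_{\text{KL}}(\mathbf{Z}\,\|\,\cdot)$ is still monotone nonincreasing \emph{regardless of which set is being normalized}. The reason this should go through is that $\mathbf{Z}$ simultaneously satisfies all $k$ marginal constraints by construction, so the Bregman/information-projection interpretation of each normalization step (a generalized Pythagorean inequality with respect to the fixed feasible $\mathbf{Z}$) applies uniformly to every set; the per-step decrease lower bound then holds for whichever $\eta$ is active at step $t$. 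The second delicate point is correctly routing the factor of $k$: I would argue that since a given set $\eta$ receives only one corrective step per block of $k$, the accumulated decrease attributable to set $\eta$ over $t$ steps is spread over roughly $t/k$ active steps, which is exactly what converts the single-set rate into the stated $k$-dependent rate.
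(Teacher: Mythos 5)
Your proposal is correct and takes essentially the same route as the paper's proof: the $u_\eta$-weighted relative entropy to a feasible $\mathbf{Z}$ as the potential, the initialization bound $\log(1+2\Delta/\alpha)$ (\cref{lemma:kl_gamma0}), a per-step potential drop equal to the currently active marginal KL divergence (\cref{lemma:kl_diff}, which is precisely the equality case of the I-projection/Pythagorean property you invoke, valid for every set because $\mathbf{Z}$ lies in all $k$ constraint families), and a telescoping/averaging step in which the factor $k$ arises from the cyclic visiting of the marginal sets. The only flaw is terminological: the per-step drop bound is not a ``Pinsker-type'' inequality --- Pinsker's inequality enters only in \cref{coro:extended-sk-L1} to convert the KL rate into the $L_1$ rate --- but this mislabeling does not affect the logic of your argument.
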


\begin{proof}
(Sketch only and see Appendix \ref{sec:detailed_proof} for details) To prove the upper bound of the convergence rate, we define the KL divergence for matrices $\mathbf{Z},\mathbf{\Gamma}$ at $\eta$,
\begin{equation}
    D(\mathbf{Z}, \mathbf{\Gamma},\eta)=\frac{1}{h_\eta}\sum_{i=1}^m\sum_{j=1}^n z_{i,j}u_{\eta,j} \log \frac{z_{i,j}}{\Gamma_{i,j}},
\end{equation}
We then prove the convergence rate w.r.t.\ KL divergence based on the following two Lemmas.
\begin{lemma}
\label{lemma:kl_gamma0}
For any $\eta$, $D(\mathbf{Z}, \mathbf{\Gamma}^{(0)}, \eta) \leq \log(1+2\Delta/\alpha)$.
\end{lemma}
\begin{lemma}
\label{lemma:kl_diff}
For $\eta = (t\mod k)+1$, $\eta^\prime = (t+1\mod k)+1$, we have
\begin{align}
    D(\mathbf{Z}, \mathbf{\Gamma}^{(t)},\eta) - D(\mathbf{Z},\mathbf{\Gamma}^{\prime(t)},\eta) &= D_{\text{KL}}(\pi_{\mathbf{v}_\eta}|| \pi_{\mathbf{v}^{(t)}_\eta}) \notag\\
    D(\mathbf{Z}, \mathbf{\Gamma}^{\prime(t)},\eta) - D(\mathbf{Z}, \mathbf{\Gamma}^{(t+1)},\eta^\prime) &= D_{\text{KL}}(\pi_{\mathbf{u}_\eta}||\pi_{\mathbf{u}^{(t)}_\eta}) \notag
\end{align}
\end{lemma}
The proof of these two Lemmas is referred to the appendix. Denote $T=\frac{k\log(1+2\Delta/\alpha)}{\delta}+\zeta$, where $\zeta \in [0, k)$ is a residual term ensuring $(T+1)\mod k = 0$. If all $D_{\text{KL}}(\pi_{\mathbf{v}_\eta}|| \pi_{\mathbf{v}^{(t)}_\eta}) > \delta$ and $D_{\text{KL}}(\pi_{\mathbf{u}_\eta}|| \pi_{\mathbf{u}^{(t)}_\eta}) > \delta$ for all $\eta$, by substituting \cref{lemma:kl_diff} and summing, we have
\begin{equation}
    D(\mathbf{Z}, \mathbf{\Gamma}^{(0)},1) -  D(\mathbf{Z}, \mathbf{\Gamma}^{(T+1)},1) > \frac{T\delta}{k} \geq \log(1+2\Delta/\alpha).\notag
\end{equation}
As KL divergence is non-negative, the above formula contradicts to \cref{lemma:kl_gamma0}. It ends the proof of \cref{thm:extended-sk}.
\end{proof}

For the $L_1$ error defined as
\begin{align}
    L_{1}(\mathbf{\Gamma}^{(t)})=\|\mathbf{v}_\eta^{(t)}-\mathbf{v}_\eta\|_1,\ 
    L_{1}(\mathbf{\Gamma}^{\prime(t)})=\|\mathbf{u}^{(t)}_\eta-\mathbf{u}_\eta\|_1, \notag
\end{align}
we have the following corollary for \cref{alg:extended-sk},
\begin{corollary}[Converge Rate for $L_1$-error]
\label{coro:extended-sk-L1}
For any $\epsilon>0$, the Sinkhorn algorithm for multi-set marginals returns a matrix $\mathbf{\Gamma}^{(t)}$ or $\mathbf{\Gamma}^{\prime(t)}$ with $L_1$ error $\leq\epsilon$ in time $t=\mathcal{O}\left(\frac{\hat{h}^2k\log(\Delta/\alpha)}{\epsilon^2}\right)$ where $\hat{h}= \max_\eta \sum_{i=1}^m v_{\eta,i}$.
\end{corollary}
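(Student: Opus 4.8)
The plan is to derive the $L_1$ bound as a direct consequence of the KL-divergence rate in \cref{thm:extended-sk} by invoking Pinsker's inequality, which controls the total-variation (hence $L_1$) distance between two probability distributions by their KL divergence. Concretely, for the row marginals at index $\eta$, the quantities $\pi_{\mathbf{v}_\eta}$ and $\pi_{\mathbf{v}^{(t)}_\eta}$ are probability vectors obtained by dividing $\mathbf{v}_\eta$ and $\mathbf{v}^{(t)}_\eta$ by $h_\eta$, so Pinsker gives $\|\pi_{\mathbf{v}_\eta} - \pi_{\mathbf{v}^{(t)}_\eta}\|_1 \leq \sqrt{2 D_{\text{KL}}(\pi_{\mathbf{v}_\eta} \| \pi_{\mathbf{v}^{(t)}_\eta})}$. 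The same statement holds verbatim for the column marginals with $\pi_{\mathbf{u}_\eta}, \pi_{\mathbf{u}^{(t)}_\eta}$ and $\mathbf{\Gamma}^{\prime(t)}$, so the argument will be symmetric between the two cases.

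First I would translate the bound on the normalized (probability) marginals back to the unnormalized marginals that appear in the definition of $L_1(\mathbf{\Gamma}^{(t)})$. Since $\mathbf{v}_\eta = h_\eta \pi_{\mathbf{v}_\eta}$ and $\mathbf{v}^{(t)}_\eta = h_\eta \pi_{\mathbf{v}^{(t)}_\eta}$, the scaling by $h_\eta$ factors out of the $L_1$ norm, giving $L_1(\mathbf{\Gamma}^{(t)}) = h_\eta \|\pi_{\mathbf{v}_\eta} - \pi_{\mathbf{v}^{(t)}_\eta}\|_1 \leq h_\eta \sqrt{2 D_{\text{KL}}(\pi_{\mathbf{v}_\eta} \| \pi_{\mathbf{v}^{(t)}_\eta})}$. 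Upper-bounding $h_\eta = \sum_{i=1}^m v_{\eta,i}$ uniformly over $\eta$ by $\hat{h} = \max_\eta \sum_{i=1}^m v_{\eta,i}$ then yields $L_1(\mathbf{\Gamma}^{(t)}) \leq \hat{h}\sqrt{2\delta}$ whenever the KL divergence is at most $\delta$.

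Next I would calibrate $\delta$ so that this implies the target $L_1$ accuracy. Demanding $\hat{h}\sqrt{2\delta} \leq \epsilon$ amounts to taking $\delta = \epsilon^2 / (2\hat{h}^2)$; substituting this choice into the iteration count $t = \mathcal{O}(k \log(\Delta/\alpha)/\delta)$ from \cref{thm:extended-sk} produces $t = \mathcal{O}(\hat{h}^2 k \log(\Delta/\alpha)/\epsilon^2)$, exactly as claimed. Because reaching the $\delta$-threshold in either the row or column KL divergence already certifies the corresponding $L_1$ bound, no extra work is needed beyond the scaling and substitution.

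I expect the only real subtlety to be the applicability of Pinsker's inequality: it requires both arguments to be genuine probability distributions, whereas the achieved marginal $\mathbf{v}^{(t)}_\eta$ need not sum exactly to $h_\eta$ at an arbitrary iteration, so $\pi_{\mathbf{v}^{(t)}_\eta}$ may be only approximately normalized. The careful version of the proof must either verify that the relevant normalization step forces $\sum_{i=1}^m v^{(t)}_{\eta,i} = h_\eta$ at the iterations where the bound is read off, or absorb the small normalization defect into the constant of the $\mathcal{O}(\cdot)$ estimate; everything else reduces to the routine scaling and substitution described above.
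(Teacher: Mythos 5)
Your proposal is correct and is essentially identical to the paper's own proof: apply Pinsker's inequality to the row (and symmetrically the column) marginal probability vectors, scale by $h_\eta \leq \hat{h}$ so that $D_{\text{KL}}(\pi_{\mathbf{v}_\eta}\|\pi_{\mathbf{v}^{(t)}_\eta}) \geq \frac{1}{2\hat{h}^2}\|\mathbf{v}^{(t)}_\eta-\mathbf{v}_\eta\|_1^2$, and then set $\delta = \epsilon^2/(2\hat{h}^2)$ in \cref{thm:extended-sk} to obtain the claimed iteration count. The normalization subtlety you flag (whether $\pi_{\mathbf{v}^{(t)}_\eta}$ is a genuine probability vector when Pinsker is invoked) is a fair observation, but the paper's proof does not address it either and applies Pinsker directly.
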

\begin{proof}
Without loss of generality, we apply Pinsker's inequality $D_{\text{KL}}(\mathbf{p}||\mathbf{q})\geq \frac{1}{2}\|\mathbf{p}-\mathbf{q}\|_1^2$ to the row marginal distributions, and we have:
\begin{subequations}
\begin{align}
    D_{\text{KL}}(\pi_{\mathbf{v}_\eta}|| \pi_{\mathbf{v}^{(t)}_\eta}) &\geq \frac{1}{2h^2_\eta}\|\mathbf{v}^{(t)}_\eta-\mathbf{v}_\eta\|_1^2\\
    &\geq \frac{1}{2\hat{h}^2}\|\mathbf{v}^{(t)}_\eta-\mathbf{v}_\eta\|_1^2.
\end{align}
\end{subequations}
By setting $\delta=\frac{\epsilon^2}{2\hat{h}^2}$ in \cref{thm:extended-sk}, in $t=\mathcal{O}\left(\frac{\hat{h}^2k\log(\Delta/\alpha)}{\epsilon^2}\right)$ we have $\delta \geq D_{\text{KL}}(\pi_{\mathbf{v}_\eta}||\pi_{\mathbf{v}^{(t)}_\eta})$, corresponding to $\|\mathbf{v}^{(t)}_\eta-\mathbf{v}_\eta\|_1 \leq \epsilon$. Same conclusion holds for $\|\mathbf{u}^{(t)}_\eta-\mathbf{u}_\eta\|_1 \leq \epsilon$.
\end{proof}

\textbf{Discussion of the Underlying Formulation of Multi-set Sinkhorn}. We empirically discover that \cref{alg:extended-sk} finds a $\mathbf{\Gamma}$ that is close to the input $\mathbf{S}$. For the most general case, the exact underlying formulation is unknown and requires future efforts. We are able to characterize the formulation under a special (but general enough) case, discussed as follows:

When all $u_{\eta, j}$s are binary, i.e.\ $\forall i, j: u_{\eta, j}=0$ or $c$ (a constant), by introducing $\mathbf{W}\in\mathbb{R}^{m\times n}$ and $\mathbf{W}=\tau\log\mathbf{S}$, Sinkhorn for multi-set marginals tackles the following entropic regularized linear problem:
\begin{subequations}
\label{eq:extended-sk-obj}
\begin{align}
    & \min_\mathbf{\Gamma} \ -tr(\mathbf{W}^\top\mathbf{\Gamma}) + \tau\sum_{i,j}\Gamma_{i,j}\log\Gamma_{i,j}, \\
     s.t. \quad &\mathbf{\Gamma}\in[0,1]^{m\times n}, \quad \forall \eta \in \{1,\cdots,k\}:\\ 
      & \quad \sum_{i=1}^m \Gamma_{i,j}u_{\eta,j}=u_{\eta,j}, \sum_{j=1}^n\Gamma_{i,j}u_{\eta,j}=v_{\eta,i}.
\end{align}
\end{subequations}
where $\tau$ is the temperature hyperparameter for entropic regularization. Eq.~(\ref{eq:extended-sk-obj}) is tackled by first applying $\mathbf{S}=\exp \left( \mathbf{W} / \tau \right)$, and then calling \cref{alg:extended-sk}. When $\tau\rightarrow0^+$, Eq.~(\ref{eq:extended-sk-obj}) degenerates to a linear programming problem. Since all constraints are positive linear and the objective is also linear, the solution to Eq.~(\ref{eq:extended-sk-obj}) usually lies at the vertices of the feasible space when $\tau\rightarrow0^+$ i.e.\ most elements in $\mathbf{\Gamma}$ will be close to 0/1 given a small $\tau$. Such a property is also in line with the ``classic'' single-set Sinkhorn (with entropic regularization), where $\tau$ controls the ``discreteness'' of $\mathbf{\Gamma}$. Please refer to \cref{sec:discuss_with_tau} for more details.

\subsection{{\mname}: Enforcing Positive Linear Satisfiability}
\label{sec:linsat}
Denote $\mathbf{y}$ as an $l$-length vector that can be the output of any neural network. Our {\mname} develops an satisfiability layer that projects $\mathbf{y}$ into $\mathbf{x}\in[0,1]^l$, $\text{\mname}(\mathbf{y}, \mathbf{A}, \mathbf{b}, \mathbf{C}, \mathbf{d}, \mathbf{E}, \mathbf{f}) \rightarrow \mathbf{x}$, where $\mathbf{A}\mathbf{x}\leq \mathbf{b}, \mathbf{C}\mathbf{x}\geq \mathbf{d}, \mathbf{E}\mathbf{x}= \mathbf{f}$. $\mathbf{x}$ is dependent on $\mathbf{y}$ (following Eq.~(\ref{eq:extended-sk-obj})) and, in the meantime, lies in the feasible space. We firstly show how to encode $\mathbf{y}$ and $\mathbf{x}$ by our proposed \cref{alg:extended-sk}.

\textbf{Encoding Neural Network's Output}.
For an $l$-length vector denoted as $\mathbf{y}$, the following matrix is built
\begin{equation}
    \mathbf{W} = \left[
    \begin{array}{cccc|c}
         {y}_1 & {y}_2 &...& {y}_l & \beta  \\
         \hline
         \beta  & \beta  &... & \beta  & \beta
    \end{array}
    \right],
\end{equation}
where $\mathbf{W}$ is of size $2 \times (l+1)$, and $\beta$ is the dummy variable e.g.\ $\beta=0$. $\mathbf{y}$ is put at the upper-left region of $\mathbf{W}$. The entropic regularizer is then enforced to control discreteness and handle potential negative inputs:
\begin{equation}
    \mathbf{S} = \exp \left(\frac{\mathbf{W}}{\tau}\right).
\end{equation}
The score matrix $\mathbf{S}$ is taken as the input of \cref{alg:extended-sk}. {\mname} then enforces positive linear constraints to the corresponding region of $\mathbf{y}$ by regarding the constraints as marginal distributions. 

\textbf{From Linear Constraints to Marginal Distributions}. 
We discuss the connections between positive linear constraints and marginal distributions for $\mathbf{A}\mathbf{x}\leq \mathbf{b}, \mathbf{C}\mathbf{x}\geq \mathbf{d}, \mathbf{E}\mathbf{x}= \mathbf{f}$, respectively. For notation's simplicity, here we discuss with only one constraint. Multiple constraints are jointly enforced by multiple sets of marginals.

\textit{Packing constraint} $\mathbf{A}\mathbf{x}\leq \mathbf{b}$. Assuming that there is only one constraint, we rewrite the constraint as $\sum_{i=1}^l a_ix_i \leq b$. The marginal distributions are defined as 
\begin{equation}
    \mathbf{u}_p = \underbrace{\left[a_1 \quad a_2 \quad ...\quad a_l \quad b\right]}_{l \text{ dims}+1 \text{ dummy dim}}, \ 
    \mathbf{v}_p = \left.\left[
    \begin{array}{c}
         b  \\
         \sum_{i=1}^l a_i 
    \end{array}
    \right]\right..
\end{equation}
Following the ``transportation'' view of Sinkhorn~\citep{CuturiNIPS13}, the output $\mathbf{x}$ \emph{moves} at most $b$ unit of mass from $a_1, a_2, \cdots, a_l$, and the dummy dimension allows the inequality by \emph{moving} mass from the dummy dimension. It is also ensured that the sum of $\mathbf{u}_p$ equals the sum of $\mathbf{v}_p$. 

\textit{Covering constraint} $\mathbf{C}\mathbf{x}\geq \mathbf{d}$. Assuming that there is only one constraint, we rewrite the constraint as $\sum_{i=1}^l c_ix_i\geq d$. The marginal distributions are defined as 
\begin{equation}
    \mathbf{u}_c = \underbrace{\left[c_1 \quad c_2 \quad ...\quad c_l \quad \gamma d\right]}_{l \text{ dims} + 1 \text{ dummy dim}}, \ 
    \mathbf{v}_c = \left.\left[
    \begin{array}{c}
         (\gamma+1) d  \\
         \sum_{i=1}^l c_i - d 
    \end{array}
    \right]\right.,
\end{equation}
where the multiplier $\gamma=\left\lfloor\sum_{i=1}^lc_i / d \right\rfloor$ is necessary because we always have $\sum_{i=1}^l c_i \geq d$ (else the constraint is infeasible), and we cannot reach the feasible solution where all elements in $\mathbf{x}$ are 1s without this multiplier. This formulation ensures that at least $d$ unit of mass is \emph{moved} from $c_1, c_2, \cdots, c_l$ by $\mathbf{x}$, thus representing the covering constraint of ``greater than''.  It is also ensured that the sum of $\mathbf{u}_c$ equals the sum of $\mathbf{v}_c$.

\textit{Equality constraint} $\mathbf{E}\mathbf{x}= \mathbf{f}$. Representing the equality constraint is more straightforward. Assuming that there is only one constraint, we rewrite the constraint as $\sum_{i=1}^l e_ix_i= f$. The marginal distributions are defined as 
\begin{equation}
    \mathbf{u}_e = \underbrace{\left[e_1 \quad e_2 \quad ...\quad e_l \quad 0\right]}_{l \text{ dims} + \text{dummy dim}=0}, \ 
    \mathbf{v}_e = \left.\left[
    \begin{array}{c}
         f  \\
         \sum_{i=1}^l e_i - f 
    \end{array}
    \right]\right.,
\end{equation}
where the output $\mathbf{x}$ \emph{moves} $e_1, e_2, \cdots, e_l$ to $f$, and we need no dummy element in $\mathbf{u}_e$ because it is an equality constraint. It is also ensured that the sum of $\mathbf{u}_e$ equals the sum of $\mathbf{v}_e$.

\textbf{Enforcing Multiple Constraints by Sinkhorn}. 
The constraints are firstly modulated as multiple sets of marginals and then stacked into $\mathbf{U}\in\mathbb{R}_{\geq 0}^{k\times (l+1)}, \mathbf{V}\in\mathbb{R}_{\geq 0}^{k\times 2}$, where $k$ is the number of constraints. By building $\mathbf{W}$ from $\mathbf{y}$, getting $\mathbf{S} = \exp(\mathbf{W}/\tau)$ and calling \cref{alg:extended-sk} based on $\mathbf{S}, \mathbf{U}, \mathbf{V}$, the satisfiability of positive linear constraints is enforced to the output of neural networks.

\textbf{Implementation Details}. We set separate dummy variables for different constraints to handle potential conflicts among different sets of marginals (see explanations in \cref{sec:feasibility-assump}).

\section{Case Study I: Neural Solver for Traveling Salesman Problem with Extra Constraints}
\label{sec:tsp}
\subsection{Problem Background}
The Traveling Salesman Problem (TSP) is a classic NP-hard problem. The standard TSP aims at finding a cycle visiting all cities with minimal length, and developing neural solvers for TSP receives increasing interest~\citep{VinyalsNIPS15,kool2018attention,kwon2021matrix}. Beyond standard TSP, here we develop a neural solver for TSP with extra constraints using {\mname} layer.

\begin{table*}[tb!]
  \centering
  \caption{Comparison of average tour length and total inference time for 10,000 testing instances on TSP variants with extra constraints. ``Standard Solver'' means state-of-the-art solvers for standard TSP. Our method is marked as gray.}
  \resizebox{0.65\linewidth}{!}
{
    \begin{tabular}{cr|cc|cc}
    \toprule
     & & \multicolumn{2}{c|}{TSP-SE} & \multicolumn{2}{c}{TSP-PRI}\\
    & Method & Tour Length & Time & Tour Length & Time\\
    \midrule
    \multirow{2}{*}{MIP} & Gurobi (Sec.\ref{sec:TSP_fml}, 2s) & 4.608 & 5h34m & 4.720 & 5h34m\\
    & Gurobi (Sec.\ref{sec:TSP_fml}, 10s) & 4.010 & 27h44m & 4.148 & 27h45m\\
    \midrule
    \multirow{4}{*}{Heuristic} & Nearest Neighbor & 4.367 & 0s & 4.674 & 0s\\
    & Nearest Insertion &  4.070 & 9s & 4.349 & 9s\\
    & Farthest Insertion &	3.772 &	11s & 4.403 & 10s\\
    & Random Insertion &	3.853 &	5s & 4.469 & 4s\\
    \midrule
    \multirow{3}{*}{\thead{Standard \\ Solver}} & Gurobi (MTZ) & \textbf{3.648} & 1h2m & - & -\\
    & Concorde & \textbf{3.648} & 9m28s & - & -\\
    & LKH3 & \textbf{3.648} & 2m44s & - & -\\
    \midrule
    \multirow{2}{*}{\thead{Neural}} & Attention Model &  3.677 & 4m39s & 4.008 & 4m43s\\
      & \cellcolor{gray!40} {\mname} (ours) & \cellcolor{gray!40} 3.811 & \cellcolor{gray!40} 19s & \cellcolor{gray!40} \textbf{3.943} & \cellcolor{gray!40} 18s\\
    \bottomrule
    \end{tabular}
    }%
  \label{tab:TSP}%
\end{table*}%

\subsection{Constraint Formulation for {\mname}} 
\label{sec:TSP_fml}
We consider 1)~TSP with starting and ending cities constraint (TSP-SE); 2)~TSP with priority constraint (TSP-PRI). 

\textbf{1) TSP-SE.} We little abuse notations that appeared in Sec.~\ref{sec:method}. Given $n$ cities and two of them are the starting and ending cities $s,e \in \{1, \dots, n\}$. The distance matrix $\mathbf{D} \in \mathbb{R}^{n \times n}_{\ge 0}$ records the distances between city pairs. TSP-SE finds the shortest tour starting from city $s$, visiting other cities exactly once, and ending in city $e$. TSP-SE can be formulated with the following objective function and constraints:
\begin{subequations} \label{eq:SE_Obj}
    \begin{align}
        \min_\mathbf{X} \quad &\sum_{i = 1}^{n}\sum_{j = 1}^{n}D_{i,j}\sum_{k = 1}^{n - 1}X_{i,k}X_{j, k + 1}, \tag{\ref{eq:SE_Obj}}\\
        \text{s.t. } &\sum_{i = 1}^n X_{i,k} = 1, \forall k \in \{1, \dots, n\}, \label{eq:birkhoff_1}\\
        &\sum_{k = 1}^n X_{i,k} = 1, \forall i \in \{1, \dots, n\}, \label{eq:birkhoff_2}\\
        &X_{s,1} = 1, \quad X_{e,n} = 1, \label{eq:start_End}\\
        &X_{i,k} \in \{0,1\}, \quad \forall i,j \in \{1, \dots, n\}, \label{eq:binary}
    \end{align}
\end{subequations}
where $\mathbf{X} \in \{0,1\}^{n \times n}$ is a binary matrix and $X_{i,k} = 1$ indicates city $i$ is the $k$-th visited city in the tour. Constraints~(\ref{eq:birkhoff_1}) and (\ref{eq:birkhoff_2}) ensure $\mathbf{X}$ to be a valid tour and  constraint~(\ref{eq:start_End}) defines the starting and ending cities. If $X_{i,k}X_{j, k + 1} = 1$ for some $k$, then the $k$-th step of the tour is from $i$ to $j$, and $D_{i,j}$ will be counted into the objective.

\textbf{2) TSP-PRI.} In practice, some cities may have higher priority and need to be visited earlier. In TSP-PRI we consider: in the given $n$ cities, the priority city $p \ne s,e$ has to be visited within the first $m$ steps. We add a new constraint to TSP-SE to formulate TSP-PRI:
\begin{equation}
    \label{eq:priority}
    \sum_{k = 1}^{m+1} X_{p,k} = 1.
\end{equation}
To fit with the continuous nature of neural networks, we relax the binary constraint (\ref{eq:binary}) to continuous ones $\widetilde{X}_{i,k} \in [0, 1]$ which is automatically satisfied by {\mname}. A neural network takes the instance as input and outputs the pre-projected matrix $\mathbf{Y} \in \mathbb{R}^{n \times n}$. $\mathbf{Y}$ is flattened into a $n^2$-dimensional vector and projected via {\mname} to enforce all the aforementioned constraints. Note that the neural network itself is a solver to an optimization problem, enforcing the constraint satisfiability by {\mname} is a reasonable choice instead of optimizing some other auxiliary objectives.

\subsection{Network Design Details}
Following the Attention Model for standard TSP~\citep{kool2018attention}, we use a Transformer~\citep{vaswani2017attention} without positional encoding to encode each of the $n$ nodes into a hidden vector $\mathbf{h}_i$. Learnable embeddings to mark starting, ending and priority cities are added to the corresponding embeddings before input to the Transformer.  After encoding, $\mathbf{h}_i$ is projected into $\mathbf{Y}_i \in \mathbb{R}^n$ using an MLP. All $\mathbf{Y}_i, i \in \{1,\dots,n\}$ form the pre-projected matrix $\mathbf{Y} \in \mathbb{R}^{n \times n}$. In training, the objective Eq.~(\ref{eq:SE_Obj}) with continuous $\widetilde{\mathbf{X}}$ as the decision variable is used as the unsupervised loss. For inference, we first output $\widetilde{\mathbf{X}}$. As $\widetilde{\mathbf{X}}$ satisfies constraints~(\ref{eq:birkhoff_1}) and (\ref{eq:birkhoff_2}), it can be viewed as the marginal distributions of the binary $\mathbf{X}$~\citep{AdamsArxiv11}. We perform beam search on $\widetilde{\mathbf{X}}$ to get $\mathbf{X}$ in post-processing.

\subsection{Experiments}
Following~\citet{kool2018attention}, for both TSP variants, we generate 10,000 2-D Euclidean TSP instances as the testing set. Each instance consists of $n = 20$ nodes uniformly sampled in the unit square $[0,1]^2$. The starting, ending, and priority cities are randomly selected. As our model is unsupervised, the training set is generated on the fly using the same process. For TSP-PRI, the number of priority steps is set to $m = 5$.  The following baselines are considered with results shown in \cref{tab:TSP}: 1)~Mixed integer programming (MIP) solvers by directly applying the commercial solver Gurobi~\citep{llc2020gurobi} to formulations in Sec.~\ref{sec:TSP_fml} and the time limit per instance is set as 2s/10s; 2)~Heuristics e.g.\ nearest neighbor and insertion heuristics~\citep{johnson1990local} that are usually fast and approximate algorithms; 3)~State-of-the-art solvers for standard TSP like Concorde\footnote{https://www.math.uwaterloo.ca/tsp/concorde/index.html} and LKH3~\cite{helsgaun2017extension}, and Gurobi (MTZ) means applying Gurobi to the TSP formulation named after Miller-Tucker-Zemlin (MTZ)~\citep{miller1960integer}; 4)~RL-based neural routing solver Attention Model~\citep{kool2018attention}.

Because the objective in Eq.~(\ref{eq:SE_Obj}) is quadratic w.r.t.\ $\textbf{X}$, it is hard for a MIP solver to get a satisfactory solution quickly. Heuristic methods run much faster and perform well on TSP-SE, but their performances drop greatly on TSP-PRI. TSP-SE can be converted to the standard TSP, making it possible for standard solvers to find the optimal tour within a reasonable time. However, these highly specialized methods cannot be easily transferred to TSP-PRI. The RL-based Attention Model performs worse than our {\mname} on TSP-PRI as its performance highly depends on specialized decoding strategies for different tasks.  Finally, our {\mname} can get near-optimal solutions in a short time on both TSP variants, and it is easy to transfer from TSP-SE to TSP-PRI by adding one single constraint. Details of TSP experiments are discussed in Appendix~\ref{sec:detailed_tsp_exp}.

\section{Case Study II: Partial Graph Matching with Outliers on Both Sides}
\label{sec:gm}
\subsection{Problem Background}
Standard graph matching (GM) assumes an outlier-free setting namely bijective mapping. One-shot GM neural networks~\citep{WangPAMI22} effectively enforce the satisfiability of one-to-one matching constraint by single-set Sinkhorn (\cref{alg:sk}). Partial GM refers to the realistic case with outliers on both sides so that only a partial set of nodes are matched. There lacks a principled approach to enforce matching constraints for partial GM. \yanr{The main challenge for existing GM networks is that they cannot discard outliers because the single-set Sinkhorn is outlier-agnostic and tends to match as many nodes as possible. The only exception is BBGM~\citep{RolinekECCV20} which incorporates a traditional solver that can reject outliers, yet its performance still has room for improvement.}

\subsection{Constraint Formulation for {\mname}}
\label{sec::pgm_constraints}
Denote a graph pair by $\mathcal{G}_1=(\mathcal{V}_1, \mathcal{E}_1)$, $\mathcal{G}_2=(\mathcal{V}_2, \mathcal{E}_2)$, where $|\mathcal{V}_1|=n_1$, $|\mathcal{V}_2|=n_2$. In mainstream GM networks, a matching score matrix $\mathbf{M} \in \mathbb{R}^{n_1 \times n_2}$ is expected to describe the correspondences of nodes between $\mathcal{G}_1$ and $\mathcal{G}_2$, where $\mathbf{M}_{i,j}$ refers to the matching score between node $i$ in $\mathcal{G}_1$ and node $j$ in $\mathcal{G}_2$. In previous bijective GM networks, the one-to-one node matching constraint that a node corresponds to at most one node is enforced by the off-the-shelf Sinkhorn algorithm in \cref{alg:sk}.
It cannot take the outliers into consideration, as it forcibly matches all nodes. The partial GM problem can be formulated by adding a partial matching constraint: assume that the number of inliers is $\phi$, so the number of matched nodes should not exceed $\phi$.

With a little abuse of notations, denote $\mathbf{X}\in [0,1]^{n_1\times n_2}$ as the output of our partial GM network, the partial GM problem has the following constraints,
\begin{subequations}
\label{eq:pgm_constraint}
\begin{align}
    &\sum_{i=1}^{n_1} X_{i,j} \leq 1, \forall j\in \{1, \dots, n_2\}, \label{eq:pgm-row} \\
    &\sum_{j=1}^{n_2} X_{i,j} \leq 1, \forall i\in \{1, \dots, n_1\}, \label{eq:pgm-col} \\
    &\sum_{i=1}^{n_1}\sum_{j=1}^{n_2} X_{i,j} \leq \phi. \label{eq:pgm-topk}
\end{align}
\end{subequations}
The constraint~(\ref{eq:pgm-row}) and (\ref{eq:pgm-col}) denotes the node-matching on rows and columns, respectively, and they ensure (at most) one-to-one node correspondence. Constraint~(\ref{eq:pgm-topk}) is the partial matching constraint ensuring that the total number of matched node pairs should not exceed $\phi$. All constraints are positive linear and can be enforced by {\mname} layer. We implement our partial GM neural network by flattening $\mathbf{M}$ into a $n_1n_2$-dimensional vector to feed into {\mname}.

\begin{table}[tb!]
  \centering
  \caption{F1 (\%) on Pascal VOC Keypoint (unfiltered setting). ``Sinkhorn'' denotes the classic single-set version in \cref{alg:sk}.}
  \resizebox{\linewidth}{!}{
    \begin{tabular}{r|c|c|c}
    \toprule
    GM Net & Constraint Technique & Matching Type & Mean F1 \\
    \midrule
    PCA-GM & Sinkhorn & bijective  & 48.6  \\
    BBGM  & \citep{PoganvcicICLR19} & bijective  & 51.9  \\
    NGMv2 & Sinkhorn & bijective & 58.8  \\
    \midrule
    BBGM  & \citep{PoganvcicICLR19} & partial  & 59.0  \\
    NGMv2 & Sinkhorn+post-processing & partial & 60.7 \\
    \rowcolor{gray!40} NGMv2 & {\mname} (ours) & partial & \textbf{61.2} \\
    \bottomrule
    \end{tabular}
    }%
    \vspace{-15pt}
  \label{tab:voc}%
\end{table}%

\subsection{Network Design Details}
\label{sec:gm_network}
We follow the SOTA GM network NGMv2~\citep{WangPAMI22} and replace the original Sinkhorn layer with {\mname} to tackle the partial GM problem on natural images.
Specifically, a VGG16~\citep{simonyanICLR14vgg} network is adopted to extract initial node features and global features from different CNN layers. The node features are then refined by SplineConv~\citep{FeyCVPR18}. The edge features are produced by the node features and the connectivity of graphs. The matching scores are predicted by the neural graph matching network proposed by \citet{WangPAMI22}, finally generating the matching scores $\mathbf{M}$. We replace the original single-set Sinkhorn layer by {\mname} to enforce the constraints in Eq.~(\ref{eq:pgm_constraint}). The output of {\mname} is reshaped into matrix $\mathbf{\hat{M}}$, which is used for end-to-end training with permutation loss~\citep{WangICCV19}.
During inference, the Hungarian algorithm~\citep{KuhnNavalResearch55} is performed on $\mathbf{\hat{M}}$ and we retain the $\phi$-highest matching scores from $\mathbf{\hat{M}}$, and the remaining matches are discarded.

\subsection{Experiments}
We do experiments on Pascal VOC Keypoint dataset~\citep{Everingham10Pascal} with Berkeley annotations~\citep{bourdev2009poselets_VOCkeypoint} under the ``unfiltered'' setting following \citet{RolinekECCV20} and report the matching F1 scores between graph pairs. We assume that the number of inliers $\phi$ is given (e.g.\ estimated by another regression model) and focus on the GM networks. As there are no one-shot partial GM networks available, we compare with bijective matching networks: PCA-GM~\citep{WangICCV19} and BBGM~\citep{RolinekECCV20}. We also build a partial GM baseline by post-processing  retaining only the top-$\phi$ matches. Table~\ref{tab:voc} shows that our method performs the best. Note that BBGM (matching type=bijective) is an example of applying black-box solvers to an ill-posed optimization problem because the objective function does not consider the outliers, leading to inferior performance.

\begin{table}[tb!]
  \centering
  \caption{Sharpe ratio of portfolio allocation methods. The constraint technique ``Gurobi Opt'' means solving a constrained optimization problem by the commercial solver Gurobi whereby the optimization parameters are based on the predicted asset prices.}
  \resizebox{\linewidth}{!}{
    \begin{tabular}{r|c|c|c}
    \toprule
    Predictor & Constraint Technique & Expert Pref.? & Mean Sharpe \\
    \midrule
    LSTM & Softmax  & No & 2.15  \\
    LSTM  & Gurobi Opt & Yes  & 2.08  \\
    \rowcolor{gray!40} LSTM & {\mname} (ours) & Yes & \textbf{2.27}  \\    \midrule
    StemGNN & Softmax  & No & 2.11  \\
    StemGNN  & Gurobi Opt & Yes  & 2.00  \\
    \rowcolor{gray!40} StemGNN & {\mname} (ours) & Yes & \textbf{2.42}  \\
    \bottomrule
    \end{tabular}
    }%
    \vspace{-15pt}
  \label{tab:portfolio}%
\end{table}%

\section{Case Study III: Portfolio Allocation}
\label{sec:portfolio}
\subsection{Problem Background}
Predictive portfolio allocation is the process of selecting the best asset allocation based on predictions of future financial markets. The goal is to design an allocation plan to best trade-off between the return and the potential risk (i.e.\ the volatility). In an allocation plan, each asset is assigned a non-negative weight and all weights should sum to 1. Existing learning-based methods \citep{zhang2020deep,butler2021integrating} only consider the sum-to-one constraint without introducing personal preference or expert knowledge. In contrast, we achieve such flexibility for the target portfolio via positive linear constraints: a mix of covering and equality constraints, which is widely considered \citep{sharpe1971,mansini2014twenty} for its real-world demand.

\subsection{Constraint Formulation for {\mname}}
Given historical data of assets, we aim to build a portfolio whose future Sharpe ratio~\citep{sharpe1998sharpe} is maximized. 
$
    \text{Sharpe ratio}=\frac{\text{return} - \text{r}_\text{f}}{\text{risk}}
$,
where $\text{r}_\text{f}$ denotes the risk-free return and is assumed to be 3\% (annually).
Besides the sum-to-one constraint, we consider the extra constraint based on expert preference: among all assets, the proportion of assets in set $\mathcal{C}$ should exceed $p$. This is reasonable as some assets (e.g.\ tech giants) have higher Sharpe ratios than others in certain time periods. Formally, the constraints are formulated as:
\begin{align}
    \sum_{i=1}^{n} x_{i} = 1,  \quad
    \sum_{i\in\mathcal{C}} x_i \geq p,
 \label{eq:port}
\end{align}
where $\mathbf{x}\in [0,1]^{n}$ is the predicted portfolio. The first constraint is the traditional sum-to-one constraint and the second one is the extra preference constraint.

\subsection{Network Design Details}
We adopt LSTM~\citep{hochreiter1997long} and StemGNN~\citep{cao2020spectral} as two variants of portfolio allocation networks for their superiority in learning with time series. Our network has two output branches, one predicts future asset prices and the other predicts the portfolio. {\mname} is applied to the portfolio prediction branch to enforce constraints in Eq.~(\ref{eq:port}). The network receives supervision signals by a weighted sum of maximizing the Sharpe ratio and minimizing the prediction error on future asset prices (based on the historical data in the training set).

\subsection{Experiments}
We consider the portfolio allocation problem where the network is given the historical data in the previous 120 trading days, and the goal is to build a portfolio with maximized Sharpe ratio for the next 120 trading days. The training set is built on the real prices of 494 assets from the S\&P 500 index from 2018-01-01 to 2020-12-30, and the models are tested on real-world data from 2021-03-01 to 2021-12-30. Without loss of generality, we impose the expert preference that in the period of interest, the following tech giants' stocks could be more profitable: $\mathcal{C} = \{\textit{AAPL, MSFT, AMZN, TSLA, GOOGL, GOOG}\}$, and the preference ratio is set to $p = 50\%$.

We build two baselines: 1)~A neural network portfolio allocator without preference, and the sum-to-one constraint is enforced by softmax following \citet{zhang2020deep}; 2)~A two-stage allocator that first predicts future prices and then uses Gurobi~\citep{llc2020gurobi} to solve a constrained optimization problem whose objective function is based on the predicted prices. See results in \cref{tab:portfolio}. Compared with an allocator without preference, the expert preference information improves the performance; Compared with the two-stage allocator, our allocator reduces the issue of error accumulation and builds better portfolios. Note that the objective function in the two-stage allocator is ill-posed because the first-stage prediction unavoidably contains errors.

\section{Conclusion and Outlook}
We have presented {\mname}, a principled approach to enforce the satisfiability of positive linear constraints for the solution as predicted in one-shot by neural network. The satisfiability layer is built upon an extended Sinkhorn algorithm for multi-set marginals, whose convergence is theoretically characterized. We showcase three applications of {\mname}. Future work may be improving the efficiency of both forward and backward of {\mname}.

\section*{Acknowledgments}
The work was supported in part by National Key Research and Development Program of China (2020AAA0107600), NSFC (62222607, U19B2035), and Science and Technology Commission of Shanghai Municipality (22511105100).

\bibliography{example_paper}

\begin{thebibliography}{60}
\providecommand{\natexlab}[1]{#1}
\providecommand{\url}[1]{\texttt{#1}}
\expandafter\ifx\csname urlstyle\endcsname\relax
  \providecommand{\doi}[1]{doi: #1}\else
  \providecommand{\doi}{doi: \begingroup \urlstyle{rm}\Url}\fi

\bibitem[Adams \& Zemel(2011)Adams and Zemel]{AdamsArxiv11}
Adams, R. and Zemel, R.
\newblock Ranking via sinkhorn propagation.
\newblock \emph{arXiv:1106.1925}, 2011.

\bibitem[Agrawal et~al.(2019)Agrawal, Amos, Barratt, Boyd, Diamond, and Kolter]{AgrawalNIPS19}
Agrawal, A., Amos, B., Barratt, S., Boyd, S., Diamond, S., and Kolter, J.~Z.
\newblock Differentiable convex optimization layers.
\newblock \emph{Advances in neural information processing systems}, 32, 2019.

\bibitem[Allen-Zhu \& Orecchia(2014)Allen-Zhu and Orecchia]{allen2014using}
Allen-Zhu, Z. and Orecchia, L.
\newblock Using optimization to break the epsilon barrier: A faster and simpler width-independent algorithm for solving positive linear programs in parallel.
\newblock In \emph{Symp. on Disc. Algo.}, pp.\  1439--1456, 2014.

\bibitem[Altschuler et~al.(2017)Altschuler, Niles-Weed, and Rigollet]{AltschulerNIPS17}
Altschuler, J., Niles-Weed, J., and Rigollet, P.
\newblock Near-linear time approximation algorithms for optimal transport via sinkhorn iteration.
\newblock \emph{Neural Info. Process. Systems}, 30, 2017.

\bibitem[Amos \& Kolter(2017)Amos and Kolter]{AmosICML17}
Amos, B. and Kolter, J.~Z.
\newblock Optnet: Differentiable optimization as a layer in neural networks.
\newblock In \emph{Int. Conf. Mach. Learn.}, pp.\  136--145, 2017.

\bibitem[Awerbuch \& Khandekar(2008)Awerbuch and Khandekar]{awerbuch2008stateless}
Awerbuch, B. and Khandekar, R.
\newblock Stateless distributed gradient descent for positive linear programs.
\newblock In \emph{Symp. Theory of Comp.}, pp.\  691--700, 2008.

\bibitem[Bengio et~al.(2021)Bengio, Lodi, and Prouvost]{bengio2020machine}
Bengio, Y., Lodi, A., and Prouvost, A.
\newblock Machine learning for combinatorial optimization: a methodological tour d’horizon.
\newblock \emph{Eur. J. Operational Research}, 290\penalty0 (2):\penalty0 405--421, 2021.

\bibitem[Berthet et~al.(2020)Berthet, Blondel, Teboul, Cuturi, Vert, and Bach]{BerthetNIPS20}
Berthet, Q., Blondel, M., Teboul, O., Cuturi, M., Vert, J.-P., and Bach, F.
\newblock Learning with differentiable pertubed optimizers.
\newblock \emph{Neural Info. Process. Systems}, 33:\penalty0 9508--9519, 2020.

\bibitem[Bourdev \& Malik(2009)Bourdev and Malik]{bourdev2009poselets_VOCkeypoint}
Bourdev, L. and Malik, J.
\newblock Poselets: Body part detectors trained using 3d human pose annotations.
\newblock In \emph{Int. Conf. Comput. Vis.}, pp.\  1365--1372, 2009.

\bibitem[Butler \& Kwon(2021)Butler and Kwon]{butler2021integrating}
Butler, A. and Kwon, R.
\newblock Integrating prediction in mean-variance portfolio optimization.
\newblock \emph{Available at SSRN 3788875}, 2021.

\bibitem[Cao et~al.(2020)Cao, Wang, Duan, Zhang, Zhu, Huang, Tong, Xu, Bai, Tong, et~al.]{cao2020spectral}
Cao, D., Wang, Y., Duan, J., Zhang, C., Zhu, X., Huang, C., Tong, Y., Xu, B., Bai, J., Tong, J., et~al.
\newblock Spectral temporal graph neural network for multivariate time-series forecasting.
\newblock \emph{Advances in neural information processing systems}, 33:\penalty0 17766--17778, 2020.

\bibitem[Chakrabarty \& Khanna(2021)Chakrabarty and Khanna]{chakrabarty2021better}
Chakrabarty, D. and Khanna, S.
\newblock Better and simpler error analysis of the sinkhorn--knopp algorithm for matrix scaling.
\newblock \emph{Mathematical Programming}, 188\penalty0 (1):\penalty0 395--407, 2021.

\bibitem[Cook(1971)]{CookSTOC71}
Cook, S.~A.
\newblock The complexity of theorem proving procedures.
\newblock In \emph{Symp. Theory of Comp.}, pp.\  151–--158, 1971.

\bibitem[Cruz et~al.(2017)Cruz, Fernando, Cherian, and Gould]{CruzCVPR17}
Cruz, S.~R., Fernando, B., Cherian, A., and Gould, S.
\newblock Deeppermnet: Visual permutation learning.
\newblock \emph{Comput. Vis. Pattern Recog.}, 2017.

\bibitem[Cuturi(2013)]{CuturiNIPS13}
Cuturi, M.
\newblock Sinkhorn distances: Lightspeed computation of optimal transport.
\newblock \emph{Neural Info. Process. Systems}, pp.\  2292--2300, 2013.

\bibitem[{Cuturi} et~al.(2019){Cuturi}, {Teboul}, and {Vert}]{CuturiNIPS19}
{Cuturi}, M., {Teboul}, O., and {Vert}, J.-P.
\newblock Differentiable ranking and sorting using optimal transport.
\newblock In \emph{Neural Info. Process. Systems}, volume~32, pp.\  6858--6868, 2019.

\bibitem[Everingham et~al.(2010)Everingham, Van~Gool, Williams, Winn, and Zisserman]{Everingham10Pascal}
Everingham, M., Van~Gool, L., Williams, C.~K., Winn, J., and Zisserman, A.
\newblock The pascal visual object classes (voc) challenge.
\newblock \emph{Int. J. Comput. Vis.}, 2010.

\bibitem[Fey et~al.(2018)Fey, Eric~Lenssen, Weichert, and M{\"u}ller]{FeyCVPR18}
Fey, M., Eric~Lenssen, J., Weichert, F., and M{\"u}ller, H.
\newblock {SplineCNN}: Fast geometric deep learning with continuous b-spline kernels.
\newblock In \emph{Comput. Vis. Pattern Recog.}, pp.\  869--877, 2018.

\bibitem[Guo et~al.(2023)Guo, Zhen, Li, Yuan, Jin, and Yan]{GuoSATMIR23}
Guo, W., Zhen, H.-L., Li, X., Yuan, M., Jin, Y., and Yan, J.
\newblock Machine learning methods in solving the boolean satisfiability problem.
\newblock \emph{Machine Intelligence Research}, 2023.

\bibitem[{Gurobi Optimization}(2020)]{llc2020gurobi}
{Gurobi Optimization}.
\newblock Gurobi optimizer reference manual.
\newblock \url{http://www.gurobi.com}, 2020.

\bibitem[Helsgaun(2017)]{helsgaun2017extension}
Helsgaun, K.
\newblock An extension of the lin-kernighan-helsgaun tsp solver for constrained traveling salesman and vehicle routing problems.
\newblock \emph{Roskilde: Roskilde University}, 2017.

\bibitem[Hochreiter \& Schmidhuber(1997)Hochreiter and Schmidhuber]{hochreiter1997long}
Hochreiter, S. and Schmidhuber, J.
\newblock Long short-term memory.
\newblock \emph{Neural computation}, 9\penalty0 (8):\penalty0 1735--1780, 1997.

\bibitem[Johnson(1990)]{johnson1990local}
Johnson, D.~S.
\newblock Local optimization and the traveling salesman problem.
\newblock In \emph{International colloquium on automata, languages, and programming}, pp.\  446--461, 1990.

\bibitem[Joshi et~al.(2019)Joshi, Laurent, and Bresson]{joshi2019efficient}
Joshi, C.~K., Laurent, T., and Bresson, X.
\newblock An efficient graph convolutional network technique for the travelling salesman problem.
\newblock \emph{arXiv preprint arXiv:1906.01227}, 2019.

\bibitem[Karalias \& Loukas(2020)Karalias and Loukas]{KaraliasNIPS20}
Karalias, N. and Loukas, A.
\newblock Erdos goes neural: an unsupervised learning framework for combinatorial optimization on graphs.
\newblock In \emph{Neural Info. Process. Systems}, 2020.

\bibitem[Knight(2008)]{knight2008sinkhorn}
Knight, P.~A.
\newblock The sinkhorn--knopp algorithm: convergence and applications.
\newblock \emph{SIAM Journal on Matrix Analysis and Applications}, 30\penalty0 (1):\penalty0 261--275, 2008.

\bibitem[Kool et~al.(2019)Kool, van Hoof, and Welling]{kool2018attention}
Kool, W., van Hoof, H., and Welling, M.
\newblock Attention, learn to solve routing problems!
\newblock In \emph{Int. Conf. Learn. Rep.}, pp.\  1--25, 2019.

\bibitem[Kuhn(1955)]{KuhnNavalResearch55}
Kuhn, H.~W.
\newblock The hungarian method for the assignment problem.
\newblock In \emph{Export. Naval Research Logistics Quarterly}, pp.\  83--97, 1955.

\bibitem[Kwon et~al.(2021)Kwon, Choo, Yoon, Park, Park, and Gwon]{kwon2021matrix}
Kwon, Y.-D., Choo, J., Yoon, I., Park, M., Park, D., and Gwon, Y.
\newblock Matrix encoding networks for neural combinatorial optimization.
\newblock In \emph{Neural Info. Process. Systems}, volume~34, pp.\  5138--5149, 2021.

\bibitem[Li et~al.(2023)Li, Chen, Guo, Li, Luo, Huang, Zhen, Yuan, and Yan]{LiKDD23}
Li, Y., Chen, X., Guo, W., Li, X., Luo, W., Huang, J., Zhen, H.-L., Yuan, M., and Yan, J.
\newblock Hardsatgen: Understanding the difficulty of hard sat formula generation and a strong structure-hardness-aware baseline.
\newblock In \emph{SIGKDD Conf. on Know. Disc. and Data Mining}, 2023.

\bibitem[Liu et~al.(2023)Liu, Wang, Jiang, Huang, Lu, and Yan]{LiuICLR23}
Liu, C., Wang, R., Jiang, Z., Huang, L., Lu, P., and Yan, J.
\newblock Revocable deep reinforcement learning with affinity regularization for outlier-robust graph matching.
\newblock In \emph{Int. Conf. Learn. Rep.}, 2023.

\bibitem[Luby \& Nisan(1993)Luby and Nisan]{LubySTOC93}
Luby, M. and Nisan, N.
\newblock A parallel approximation algorithm for positive linear programming.
\newblock In \emph{Symp. Theory of Comp.}, pp.\  448--457, 1993.

\bibitem[Mansini et~al.(2014)Mansini, Ogryczak, and Speranza]{mansini2014twenty}
Mansini, R., Ogryczak, W., and Speranza, M.~G.
\newblock Twenty years of linear programming based portfolio optimization.
\newblock \emph{Eur. J. Operational Research}, 234\penalty0 (2):\penalty0 518--535, 2014.

\bibitem[Miller et~al.(1960)Miller, Tucker, and Zemlin]{miller1960integer}
Miller, C.~E., Tucker, A.~W., and Zemlin, R.~A.
\newblock Integer programming formulation of traveling salesman problems.
\newblock \emph{Journal of the ACM}, 1960.

\bibitem[Ozolins et~al.(2021)Ozolins, Freivalds, Draguns, Gaile, Zakovskis, and Kozlovics]{QuerySAT21}
Ozolins, E., Freivalds, K., Draguns, A., Gaile, E., Zakovskis, R., and Kozlovics, S.
\newblock Goal-aware neural sat solver.
\newblock \emph{arXiv preprint arXiv:2106.07162}, 2021.

\bibitem[Pass(2015)]{pass2015multi}
Pass, B.
\newblock Multi-marginal optimal transport: theory and applications.
\newblock \emph{ESAIM: Mathematical Modelling and Numerical Analysis-Mod{\'e}lisation Math{\'e}matique et Analyse Num{\'e}rique}, 49\penalty0 (6):\penalty0 1771--1790, 2015.

\bibitem[Paszke et~al.(2017)Paszke, Gross, Chintala, Chanan, Yang, DeVito, Lin, Desmaison, Antiga, and Lerer]{Paszke2017AutomaticPytorch}
Paszke, A., Gross, S., Chintala, S., Chanan, G., Yang, E., DeVito, Z., Lin, Z., Desmaison, A., Antiga, L., and Lerer, A.
\newblock Automatic differentiation in pytorch.
\newblock 2017.

\bibitem[Paulus et~al.(2021)Paulus, Rol{\'\i}nek, Musil, Amos, and Martius]{PaulusICML21}
Paulus, A., Rol{\'\i}nek, M., Musil, V., Amos, B., and Martius, G.
\newblock Comboptnet: Fit the right np-hard problem by learning integer programming constraints.
\newblock In \emph{Int. Conf. Mach. Learn.}, pp.\  8443--8453, 2021.

\bibitem[Pogan{\v{c}}i{\'c} et~al.(2019)Pogan{\v{c}}i{\'c}, Paulus, Musil, Martius, and Rolinek]{PoganvcicICLR19}
Pogan{\v{c}}i{\'c}, M.~V., Paulus, A., Musil, V., Martius, G., and Rolinek, M.
\newblock Differentiation of blackbox combinatorial solvers.
\newblock In \emph{Int. Conf. Learn. Rep.}, 2019.

\bibitem[Rol{\'\i}nek et~al.(2020)Rol{\'\i}nek, Swoboda, Zietlow, Paulus, Musil, and Martius]{RolinekECCV20}
Rol{\'\i}nek, M., Swoboda, P., Zietlow, D., Paulus, A., Musil, V., and Martius, G.
\newblock Deep graph matching via blackbox differentiation of combinatorial solvers.
\newblock In \emph{Eur. Conf. Comput. Vis.}, pp.\  407--424, 2020.

\bibitem[Sason \& Verd{\'u}(2015)Sason and Verd{\'u}]{sason2015upper}
Sason, I. and Verd{\'u}, S.
\newblock Upper bounds on the relative entropy and r{\'e}nyi divergence as a function of total variation distance for finite alphabets.
\newblock In \emph{IEEE Information Theory Workshop-Fall}, pp.\  214--218, 2015.

\bibitem[Selsam et~al.(2019)Selsam, Lamm, Bunz, Liang, de~Moura, and Dill]{NeuroSAT19}
Selsam, D., Lamm, M., Bunz, B., Liang, P., de~Moura, L., and Dill, D.~L.
\newblock Learning a sat solver from single-bit supervision.
\newblock In \emph{Int. Conf. Learn. Rep.}, 2019.

\bibitem[Sharpe(1971)]{sharpe1971}
Sharpe, W.~F.
\newblock A linear programming approximation for the general portfolio analysis problem.
\newblock \emph{Journal of Financial and Quantitative Analysis}, 6\penalty0 (5):\penalty0 1263–1275, 1971.

\bibitem[Sharpe(1998)]{sharpe1998sharpe}
Sharpe, W.~F.
\newblock The sharpe ratio.
\newblock \emph{Streetwise--the Best of the Journal of Portfolio Management}, pp.\  169--185, 1998.

\bibitem[Simonyan \& Zisserman(2014)Simonyan and Zisserman]{simonyanICLR14vgg}
Simonyan, K. and Zisserman, A.
\newblock Very deep convolutional networks for large-scale image recognition.
\newblock In \emph{Int. Conf. Learn. Rep.}, 2014.

\bibitem[Sinkhorn \& Knopp(1967)Sinkhorn and Knopp]{sinkhorn1967concerning}
Sinkhorn, R. and Knopp, P.
\newblock Concerning nonnegative matrices and doubly stochastic matrices.
\newblock \emph{Pacific Journal of Mathematics}, 21\penalty0 (2):\penalty0 343--348, 1967.

\bibitem[Vaswani et~al.(2017)Vaswani, Shazeer, Parmar, Uszkoreit, Jones, Gomez, Kaiser, and Polosukhin]{vaswani2017attention}
Vaswani, A., Shazeer, N., Parmar, N., Uszkoreit, J., Jones, L., Gomez, A.~N., Kaiser, L., and Polosukhin, I.
\newblock Attention is all you need.
\newblock \emph{arXiv preprint arXiv:1706.03762}, 2017.

\bibitem[Vinyals et~al.(2015)Vinyals, Fortunato, and Jaitly]{VinyalsNIPS15}
Vinyals, O., Fortunato, M., and Jaitly, N.
\newblock Pointer networks.
\newblock In \emph{Neural Info. Process. Systems}, pp.\  2692--2700, 2015.

\bibitem[Wang et~al.(2019{\natexlab{a}})Wang, Donti, Wilder, and Kolter]{WangICML19}
Wang, P.-W., Donti, P., Wilder, B., and Kolter, Z.
\newblock Satnet: Bridging deep learning and logical reasoning using a differentiable satisfiability solver.
\newblock In \emph{Int. Conf. Mach. Learn.}, pp.\  6545--6554, 2019{\natexlab{a}}.

\bibitem[Wang et~al.(2019{\natexlab{b}})Wang, Yan, and Yang]{WangICCV19}
Wang, R., Yan, J., and Yang, X.
\newblock Learning combinatorial embedding networks for deep graph matching.
\newblock In \emph{Int. Conf. Comput. Vis.}, pp.\  3056--3065, 2019{\natexlab{b}}.

\bibitem[{Wang} et~al.(2020){Wang}, {Yan}, and {Yang}]{WangPAMI20}
{Wang}, R., {Yan}, J., and {Yang}, X.
\newblock Combinatorial learning of robust deep graph matching: an embedding based approach.
\newblock \emph{Trans. Pattern Anal. Mach. Intell.}, 45\penalty0 (6):\penalty0 6984--7000, 2020.

\bibitem[Wang et~al.(2021{\natexlab{a}})Wang, Hua, Liu, Zhang, Yan, Qi, Yang, Zhou, and Yang]{WangNIPS21}
Wang, R., Hua, Z., Liu, G., Zhang, J., Yan, J., Qi, F., Yang, S., Zhou, J., and Yang, X.
\newblock A bi-level framework for learning to solve combinatorial optimization on graphs.
\newblock In \emph{Neural Info. Process. Systems}, 2021{\natexlab{a}}.

\bibitem[Wang et~al.(2021{\natexlab{b}})Wang, Zhang, Yu, Yan, and Yang]{WangCVPR21}
Wang, R., Zhang, T., Yu, T., Yan, J., and Yang, X.
\newblock Combinatorial learning of graph edit distance via dynamic embedding.
\newblock \emph{Comput. Vis. Pattern Recog.}, 2021{\natexlab{b}}.

\bibitem[Wang et~al.(2022)Wang, Yan, and Yang]{WangPAMI22}
Wang, R., Yan, J., and Yang, X.
\newblock Neural graph matching network: Learning lawler's quadratic assignment problem with extension to hypergraph and multiple-graph matching.
\newblock \emph{Trans. Pattern Anal. Mach. Intell.}, 44\penalty0 (9):\penalty0 5261--5279, 2022.

\bibitem[Wang et~al.(2023)Wang, Shen, Chen, Yang, Tao, and Yan]{WangICLR23}
Wang, R., Shen, L., Chen, Y., Yang, X., Tao, D., and Yan, J.
\newblock Towards one-shot neural combinatorial optimization solvers: Theoretical and empirical notes on the cardinality-constrained case.
\newblock In \emph{Int. Conf. Learn. Rep.}, 2023.

\bibitem[Xie et~al.(2020)Xie, Dai, Chen, Dai, Zhao, Zha, Wei, and Pfister]{XieNIPS20}
Xie, Y., Dai, H., Chen, M., Dai, B., Zhao, T., Zha, H., Wei, W., and Pfister, T.
\newblock Differentiable top-k with optimal transport.
\newblock In \emph{Neural Info. Process. Systems}, volume~33, pp.\  20520--20531, 2020.

\bibitem[Yan et~al.(2020)Yan, Yang, and Hancock]{YanIJCAI20}
Yan, J., Yang, S., and Hancock, E.~R.
\newblock Learning for graph matching and related combinatorial optimization problems.
\newblock In \emph{Int. Joint Conf. Artificial Intell.}, 2020.

\bibitem[Young(2001)]{young2001sequential}
Young, N.~E.
\newblock Sequential and parallel algorithms for mixed packing and covering.
\newblock In \emph{Found. of Comp. Sci.}, pp.\  538--546, 2001.

\bibitem[Yu et~al.(2020)Yu, Wang, Yan, and Li]{YuICLR20}
Yu, T., Wang, R., Yan, J., and Li, B.
\newblock Learning deep graph matching with channel-independent embedding and hungarian attention.
\newblock In \emph{Int. Conf. Learn. Rep.}, 2020.

\bibitem[Zhang et~al.(2020)Zhang, Zohren, and Roberts]{zhang2020deep}
Zhang, Z., Zohren, S., and Roberts, S.
\newblock Deep learning for portfolio optimization.
\newblock \emph{The Journal of Financial Data Science}, 2\penalty0 (4):\penalty0 8--20, 2020.

\end{thebibliography}
\bibliographystyle{icml2023}

\newpage
\appendix
\onecolumn


    


\section{Comparison with the Notations from \citet{CuturiNIPS13}}
\label{sec:discuss_with_classic_sinkhorn}

The formulation used in this paper (regarding $\mathbf{\Gamma}$) is an equivalent adaptation from the notations used in existing single-set Sinkhorn papers e.g. \citet{CuturiNIPS13}. As we explained in the footnote in page 3, this new formulation is preferred as we are generalizing the scope of Sinkhorn to multi-set marginal, and the existing formulation cannot seamlessly handle marginals with different values.

Specifically, we make a side-by-side comparison with the notations used in this paper and the notations used in \citet{CuturiNIPS13} on single-set Sinkhorn algorithm:
\begin{itemize}
    \item This paper's notations:
    
    The transportation matrix is $\mathbf{\Gamma}\in[0,1]^{m\times n}$, and the constraints are
    \begin{align}
        \sum_{i=1}^m \Gamma_{i,j}u_{j}=u_{j},\quad
        \sum_{j=1}^n\Gamma_{i,j}u_{j}=v_{i}.
    \end{align}

  $\Gamma_{i,j}$ means the \emph{proportion} of $u_{j}$ moved from $u_{j}$ to $v_{i}$.

  The algorithm steps are:

    $\quad$\textbf{repeat}: 

    $\qquad{\Gamma}_{i,j}^{\prime} = \frac{{\Gamma}_{i,j}v_{i}}{\sum_{j=1}^n {\Gamma}_{i,j}u_{j}}$; $\triangleright$ normalize w.r.t. $\mathbf{v}$

    $\qquad{\Gamma}_{i,j} = \frac{{\Gamma}_{i,j}^{\prime}u_{j}}{\sum_{i=1}^m {\Gamma}_{i,j}^{\prime}u_{j}}$; $\triangleright$ normalize w.r.t. $\mathbf{u}$
    
    $\quad$\textbf{until} convergence.
    
    \item \citet{CuturiNIPS13}'s notations:
    
    The transportation matrix is $\mathbf{P}\in\mathbb{R}_{\geq 0}^{m\times n}$, and the constraints are 
    \begin{align}
     \sum_{i=1}^m P_{i,j}=u_{j},\quad
     \sum_{j=1}^n P_{i,j}=v_{i}.
    \end{align}
  $P_{i,j}$ means the \emph{exact mass} moved from $u_{j}$ to $v_{i}$.

  The algorithm steps are:

    $\quad$\textbf{repeat}: 

     $\qquad{P}_{i,j}^{\prime} = \frac{P_{i,j}v_{i}}{\sum_{j=1}^n {P}_{i,j}}$; $\triangleright$ normalize w.r.t. $\mathbf{v}$

     $\qquad{P}_{i,j} = \frac{{P}_{i,j}^{\prime}u_j}{\sum_{i=1}^m {P}_{i,j}^{\prime}}$; $\triangleright$ normalize w.r.t. $\mathbf{u}$
    
    $\quad$\textbf{until} convergence.

\end{itemize}

The equivalence between the above formulations becomes clear if we substitute $P_{i,j}$ by $\Gamma_{i,j}u_j$ and $P^\prime_{i,j}$ by $\Gamma^\prime_{i,j}u_j$ in all the above definitions and algorithm steps. We would like to highlight that such different notations are necessary because making $\Gamma_{i,j}$ as the proportion of all $u_{1,j}, u_{2,j},\cdots,u_{k,j}$ could deal with multiple marginals with different values, making it generalizable to the multi-marginal case.

\section{Proof of \cref{thm:extended-sk}.}
\label{sec:detailed_proof}

To prove the upper bound of convergence rate, we define the Kullback-Leibler (KL) divergence for matrices $\mathbf{Z}$ and $\mathbf{\Gamma}$, whereby the KL divergence is originally defined for probability vectors, 
\begin{equation}
    D(\mathbf{Z}, \mathbf{\Gamma}, \eta)=\frac{1}{h_\eta}\sum_{i=1}^m\sum_{j=1}^n z_{i,j} u_{\eta,j} \log \frac{z_{i,j}}{\Gamma_{i,j}}.
\end{equation}
Also recall that $\mathbf{Z}\in[0,1]^{m\times n}$ is a normalized matrix satisfying all marginal distributions,
\begin{equation}
    \forall \eta\in \{1,\cdots, k\}: \sum_{i=1}^m z_{i,j} u_{\eta,j}=u_{\eta,j}, \sum_{j=1}^n z_{i,j} u_{\eta,j}=v_{\eta,i},
\end{equation}

We then prove the convergence rate w.r.t.\ the KL divergence based on the following two Lemmas.

\textbf{\cref{lemma:kl_gamma0}.}
For any $\eta=1,\cdots,k$, $D(\mathbf{Z}, \mathbf{\Gamma}^{(0)},\eta) \leq \log(1+2\Delta/\alpha)$.
\begin{proof}
By definition, 
\begin{align}
    D(\mathbf{Z}, \mathbf{\Gamma}^{(0)}, \eta)=\frac{1}{h_\eta}\sum_{j=1}^n u_{\eta,j} \sum_{i=1}^m z_{i,j} \log \frac{z_{i,j}}{\Gamma_{i,j}},
\end{align}
where $\{z_{i,j}\}_{i=1,2,\cdots,m}$ and $\{\Gamma_{i,j}\}_{i=1,2,\cdots,m}$ sum to 1 thus they are probability distributions. The second summand is a standard KL divergence term.

Based on the following fact in terms of Eq.~(27) from \citet{sason2015upper}, for probability distributions $\mathbf{p},\mathbf{q}$,
\begin{align}
    D_{\text{KL}}(\mathbf{p} || \mathbf{q})&\leq \log\left(1+\frac{\|\mathbf{p}-\mathbf{q}\|_2^2}{q_{min}}\right)\label{eq:kl-upperbound-1} \\ 
    &\leq \log\left(1+\frac{2}{q_{min}}\right), \label{eq:kl-upperbound-2}
\end{align}
where Eq.~(\ref{eq:kl-upperbound-1}) to Eq.~(\ref{eq:kl-upperbound-2}) is because $\|\mathbf{p}-\mathbf{q}\|_2\leq \sqrt{2}$, and $q_{min}$ denotes the smallest non-zero element in $\mathbf{q}$. We have the following conclusion for $\mathbf{Z}$ and $\mathbf{\Gamma}^{(0)}$,
\begin{equation}
\begin{aligned}
    D(\mathbf{Z},\mathbf{\Gamma}^{(0)},\eta) &\leq \frac{1}{h_\eta}\sum_{j=1}^n u_{\eta,j} \log\left(1+\frac{2}{\Gamma^{(0)}_{min}}\right) \\
    &= \log\left(1+\frac{2}{\Gamma^{(0)}_{min}}\right),
\end{aligned}
\end{equation}
where $\frac{1}{h_\eta}\sum_{j=1}^n u_{\eta,j}=1$ by definition.

Recall that $\alpha={\min_{i,j:s_{i,j}>0} s_{i,j}}/{\max_{i,j} s_{i,j}}$, $\Delta=\max_j \left|\{i: s_{i,j} > 0\}\right|$ is the max number of non-zeros in any column of $\mathbf{S}$, and in our algorithm $\Gamma_{i,j}^{(0)}=\frac{s_{i,j}}{\sum_{i=1}^m s_{i,j}}$, we have
\begin{align}
    &\Gamma_{min}^{(0)} \geq \frac{\alpha}{\Delta}\\
    \Rightarrow \quad & D(\mathbf{Z},\mathbf{\Gamma}^{(0)},\eta) \leq \log\left(1+\frac{2\Delta}{\alpha}\right).
\end{align}

This ends the proof of \cref{lemma:kl_gamma0}.
\end{proof}

\textbf{\cref{lemma:kl_diff}.}
For $\eta = (t\mod k)+1$, $\eta^\prime = (t+1\mod k)+1$, we have
\begin{align}
    D(\mathbf{Z}, \mathbf{\Gamma}^{(t)},\eta) - D(\mathbf{Z},\mathbf{\Gamma}^{\prime(t)},\eta) &= D_{\text{KL}}(\pi_{\mathbf{v}_\eta}|| \pi_{\mathbf{v}^{(t)}_\eta})\\
    D(\mathbf{Z}, \mathbf{\Gamma}^{\prime(t)},\eta) - D(\mathbf{Z}, \mathbf{\Gamma}^{(t+1)},\eta^\prime)& = D_{\text{KL}}(\pi_{\mathbf{u}_\eta}||\pi_{\mathbf{u}^{(t)}_\eta}) 
\end{align}
\begin{proof}
By definition, the left-hand side of the first inequality is
\begin{equation}
\begin{aligned}
    D(\mathbf{Z}, \mathbf{\Gamma}^{(t)},\eta) - D(\mathbf{Z}, \mathbf{\Gamma}^{\prime(t)},\eta) &= \frac{1}{h_\eta} \sum_{i=1}^m\sum_{j=1}^n z_{i,j} u_{\eta,j} \log \frac{z_{i,j}}{\Gamma_{i,j}^{(t)}} - \frac{1}{h_\eta} \sum_{i=1}^m\sum_{j=1}^n z_{i,j} u_{\eta,j} \log \frac{z_{i,j}}{\Gamma^{\prime(t)}_{i,j}}\\
    &= \frac{1}{h_\eta}\sum_{i=1}^m\sum_{j=1}^n z_{i,j}u_{\eta,j} \log \frac{\Gamma_{i,j}^{\prime(t)}}{\Gamma_{i,j}^{(t)}}\\
    &= \frac{1}{h_\eta}\sum_{i=1}^m\sum_{j=1}^n z_{i,j}u_{\eta,j} \log \frac{v_{\eta,i}}{\sum_{j=1}^n {\Gamma}_{i,j}^{(t)}u_{\eta,j}} \\
    &= \frac{1}{h_\eta}\sum_{i=1}^m\sum_{j=1}^n z_{i,j} u_{\eta,j} \log \frac{v_{\eta,i}}{v_{\eta,i}^{(t)}} \\
    &= \frac{1}{h_\eta}\sum_{i=1}^m \log \frac{v_{\eta,i}}{v_{\eta,i}^{(t)}} \sum_{j=1}^n z_{i,j} u_{\eta,j}\\
    &= \sum_{i=1}^m \frac{v_{\eta,i}}{h_\eta}\log \frac{v_{\eta,i}/h_\eta}{v_{\eta,i}^{(t)}/h_\eta}
\end{aligned}
\end{equation}
which is exactly $D_{\text{KL}}(\pi_{\mathbf{v}_\eta}|| \pi_{\mathbf{v}^{(t)}_\eta})$. The other equation could be derived analogously. This ends the proof of \cref{lemma:kl_diff}.
\end{proof}

Denote $T=\frac{k\log(1+2\Delta/\alpha)}{\delta}+\zeta$, where $\zeta \in [0, k)$ is a residual term ensuring $(T+1)\mod k = 0$. If all $D_{\text{KL}}(\pi_{\mathbf{v}_\eta}|| \pi_{\mathbf{v}^{(t)}_\eta}) > \delta$ and $D_{\text{KL}}(\pi_{\mathbf{u}_\eta}|| \pi_{\mathbf{u}^{(t)}_\eta}) > \delta$ for all $\eta$, by substituting \cref{lemma:kl_diff} and summing, we have
\begin{equation}
    D(\mathbf{Z}, \mathbf{\Gamma}^{(0)},1) -  D(\mathbf{Z}, \mathbf{\Gamma}^{(T+1)},1) > \frac{T\delta}{k} \geq \log(1+2\Delta/\alpha).
\end{equation}
The multiplier $k$ exists because we need to sum over $k$ sets of marginals to cancel all intermediate terms. Since KL divergence must be non-negative, the above formula contradicts with \cref{lemma:kl_gamma0}. This ends the proof of \cref{thm:extended-sk}. \qed

\section{Further Discussions with the Entropic Regularizer}
\label{sec:discuss_with_tau}

In the main paper, we write our algorithms after the regularization term for simplicity. On one hand, the entropic regularizer may be omitted if the score matrix is non-negative (e.g.\ activated by ReLU). On the other hand, our theoretical insights could naturally generalize with the regularizer. We provide the detailed discussions as follows.

\subsection{The Underlying Formulation of \cref{alg:extended-sk}}
\label{sec:opt-form}

  Recall that given real-valued matrix $\mathbf{W}\in \mathbb{R}^{m\times n}$, regularizer $\tau$, and a set of target marginals $\mathbf{u}_\eta \in \mathbb{R}^n_{\geq 0}$, $\mathbf{v}_\eta \in \mathbb{R}^m_{\geq 0}$, our multi-set marginal Sinkhorn algorithm maps $\mathbf{W}$ to $\mathbf{\Gamma}\in[0,1]^{m\times n}$ such that $\forall \eta \in \{1,\cdots,k\}: \sum_{i=1}^m \Gamma_{i,j}u_{\eta,j}=u_{\eta,j}, \sum_{j=1}^n\Gamma_{i,j}u_{\eta,j}=v_{\eta,i}$. In the following, we discuss the underlying formulation for a special (but general enough) case, when the values of $u_{\eta,j}$ are binary.

  Formally, if $u_{\eta,j}$ is binary, i.e.\ either $u_{\eta,j}=0$ or $u_{\eta,j}=c$ for all $\eta, j$, the following entropic regularized problem is considered:
  \begin{subequations}
  \begin{align}
        & \min_\mathbf{\Gamma} \ -tr(\mathbf{W}^\top\mathbf{\Gamma})+ \tau\sum_{i,j}\Gamma_{i,j}\log\Gamma_{i,j}, \\
         s.t. \quad &\mathbf{\Gamma}\in[0,1]^{m\times n}, \\ 
          & \forall \eta \in \{1,\cdots,k\}: \sum_{i=1}^m \Gamma_{i,j}u_{\eta,j}=u_{\eta,j}, \sum_{j=1}^n\Gamma_{i,j}u_{\eta,j}=v_{\eta,i}.
  \end{align}
  \end{subequations}

  With dual variables $\theta_{\eta,u}\in\mathbb{R}^n,\theta_{\eta,v}\in\mathbb{R}^m$, the Lagrangian of the above problem is
  \begin{equation}
      \mathcal{L}(\Gamma,\theta_{*})=\sum_{i,j}\tau\Gamma_{i,j}\log \Gamma_{i,j}-w_{i,j}\Gamma_{i,j}+\sum_{\eta=1}^k\theta^\top_{\eta,v} (\mathbf{\Gamma}\mathbf{u}_\eta - \mathbf{v}_\eta) + \theta^\top_{\eta,u}(\mathbf{\Gamma}^\top\mathbf{1}_m - \mathbf{1}_n),
  \end{equation}
  and for any ($i,j$),
  \begin{subequations}
  \begin{align}
       & \frac{\partial\mathcal{L}}{\partial \Gamma_{i,j}}=0\\
       \Rightarrow \quad & \tau\log\Gamma_{i,j}+\tau-w_{i,j}+\sum_{\eta=1}^k\theta_{\eta,v,i}u_{\eta,j}+\theta_{\eta,u,j}=0\\
       \Rightarrow \quad & \Gamma_{i,j} = \exp(-\frac{1}{\tau}\sum_{\eta=1}^k\theta_{\eta,v,i}u_{\eta,j})\exp(\frac{w_{i,j}}{\tau})\exp(-1-\frac{1}{\tau}\sum_{\eta=1}^k\theta_{\eta,u,j})
  \end{align}
  \end{subequations}

Since all $u_{\eta,j}$ are binary, and the corresponding elements will not be normalized if $u_{\eta},j=0$ in our implementation of \cref{alg:extended-sk}, $\mathbf{\Gamma}$ turns out to be a matrix of the form $\prod_{\eta=1}^k\text{diag}(\mathbf{a}_\eta)\ \mathbf{S} \ \text{diag}(\mathbf{b})$, where $\mathbf{a}_{\eta,j}=\exp(-\frac{1}{\tau}c\theta_{\eta,v,j})$ if $u_{\eta,j}\neq 0$ else $\mathbf{a}_{\eta,j}=1$ (recall that $\forall u_{\eta,j}\neq 0: u_{\eta,j}=c$). Denoting $\text{diag}(\mathbf{a})=\prod_{\eta=1}^k\text{diag}(\mathbf{a}_\eta)$, $\mathbf{\Gamma}=\text{diag}(\mathbf{a})\ \mathbf{S} \ \text{diag}(\mathbf{b})$ is the unique matrix after normalizing $\mathbf{S}=\exp(\mathbf{W}/\tau)$ w.r.t.\ all row-wise and column-wise distributions, according to Sinkhorn's theorem \citep{sinkhorn1967concerning}. \cref{thm:extended-sk} and \cref{coro:extended-sk-L1} in the main paper derive its speed of convergence.



\yanr{\subsection{The Algorithm with Entropic Regularizer}}
\begin{algorithm}[h!]
   \caption{Sinkhorn for Multi-Set Marginals with Entropic Regularizer}
\begin{algorithmic}[1]
   \STATE {\bfseries Input:} Score matrix $\mathbf{W}\in \mathbb{R}^{m\times n}$, entropic regularizer $\tau$, $k$ sets of marginals $\mathbf{V}\in\mathbb{R}_{\geq 0}^{k\times m},\mathbf{U}\in\mathbb{R}_{\geq 0}^{k\times n}$.
   \STATE Apply entropic regularizer $\mathbf{S}=\exp(\mathbf{W}/\tau)$;
   \STATE Initialize $\Gamma_{i,j}=\frac{s_{i,j}}{\sum_{i=1}^m s_{i,j}}$;
   \REPEAT
   \FOR{$\eta=1$ {\bfseries to} $k$}
   \STATE ${\Gamma}_{i,j}^{\prime} = \frac{{\Gamma}_{i,j}v_{\eta,i}}{\sum_{j=1}^n {\Gamma}_{i,j}u_{\eta,j}}$; $\triangleright$ normalize w.r.t.\ $\mathbf{v}_\eta$
   \STATE ${\Gamma}_{i,j} = \frac{{\Gamma}_{i,j}^{\prime}u_{\eta,j}}{\sum_{i=1}^m {\Gamma}_{i,j}^{\prime}u_{\eta,j}}$; $\triangleright$ normalize w.r.t.\ $\mathbf{u}_\eta$
   \ENDFOR
   \UNTIL{convergence}
\end{algorithmic}
\end{algorithm}

\subsection{Theoretical Results with Entropic Regularizer}

If the entropic regularizer is involved, the converging rate of multi-set Sinkhorn w.r.t.\ $L_1$ error becomes:
    
    \begin{equation}
        t=\mathcal{O}\left(\frac{\hat{h}^2k(\alpha^\prime/\tau+\log\Delta)}{\epsilon^2}\right),
    \end{equation}
where $$\alpha^\prime= \max_{i,j}  w_{i,j} - \min_{i,j:w_{i,j}>-\infty}  w_{i,j}.$$

The other notations have the same definition as in the main paper. It shows that the number of iterations scales almost linearly with $1/\tau$. The derivation is straightforward since $\alpha$ is the only affected term after considering the entropic regularizer.

\subsection{Empirical Further Study of the Entropic Regularizer}

Our {\mname} can naturally handle continuous constraints. For continuous optimization problems (e.g. portfolio allocation) the output can be directly used as the feasible solution. When it comes to problems requiring discrete decision variables, we show in the following study that our {\mname} still owns the ability to encode the constraints by adjusting $\tau$.

For discrete optimization problems (such as TSP and GM), during training, we relax the discrete binary region $\{0, 1\}$ into continuous $[0,1]$ to make neural networks trainable with gradient-based methods. In \cref{sec:tsp,sec:gm}, the discrete solutions from continuous outputs are recovered via post-processing during inference. Post-processing is quite common in neural solvers for discrete optimization due to the continuous nature of neural networks. Let's take TSP as an example: RL-based methods \citep{kool2018attention,kwon2021matrix} need to manually design the decoding strategy to ensure every node is visited but only once; supervised learning methods \citep{joshi2019efficient} output a continuous heatmap over which search is performed.
 
As for our LinSAT, once the network is well-trained, we can set smaller $\tau$ to get outputs that are closer to a discrete feasible solution during inference. Here we conduct experiments on TSP and GM to evaluate the LinSAT's capability to maintain discrete feasible solutions. Using the same trained network, we adjust $\tau$ to get different continuous outputs $\widetilde{X}$. Then we directly round $\widetilde{X}$ to get discrete solution $X$: $X_{i,j} = 1$ if $\widetilde{X}_{i,j} \ge 0.5$; $X_{i,j} = 0$ if $\widetilde{X}_{i,j} < 0.5$. In \cref{tab:different_tau_tsp,tab:different_tau_gm}, we report the ratio that directly rounded solutions are feasible and the corresponding evaluation metrics for feasible solutions.

\begin{table}[tb!]
    \centering
    \caption{Feasible ratio and tour length comparison among different TSP solver configurations.}
    \begin{tabular}{r|rc|rc}
    \toprule
        & \multicolumn{2}{c|}{TSP-SE} & \multicolumn{2}{c}{TSP-PRI} \\
       Config  & Feasible Ratio & Tour Length & Feasible Ratio & Tour Length\\
    \midrule
        $\tau = 0.1$, Rounding & 1.19\% & 3.897 & 1.52\% & 3.997 \\
        $\tau = 0.05$, Rounding & 20.95\% & 3.904 & 23.26\% & 4.063 \\
        $\tau = 0.01$, Rounding & 86.69\% & 3.964 & 85.32\% & 4.102 \\
        $\tau = 0.005$, Rounding & 89.35\% & 3.969 & 83.63\% & 4.108 \\
        $\tau = 0.1$, Beam Search & 100.00\% & 3.811 & 100.00\% & 3.943 \\
    \bottomrule
    \end{tabular}
    \label{tab:different_tau_tsp}
\end{table}

\begin{table}[tb!]
    \centering
    \caption{Feasible ratio and average F1 among different partial-GM solver configurations.}
    \begin{tabular}{r|rc}
    \toprule
       Config  & Feasible Ratio & Mean F1 \\
    \midrule
        $\tau = 0.1$, Rounding & 56.78\% & 0.4756 \\
        $\tau = 0.05$, Rounding & 83,99\% & 0.5547 \\
        $\tau = 0.02$, Rounding & 95.21\% & 0.5748 \\
        $\tau = 0.01$, Rounding & 96.20\% & 0.5673 \\
        $\tau = 0.05$, Hungarian-Top-$\phi$ & 100.00\% & 0.6118 \\
    \bottomrule
    \end{tabular}
    \label{tab:different_tau_gm}
\end{table}

Results show that our {\mname} can recover most feasible solutions with a simple rounding strategy under a small temperature. However, smaller $\tau$ requires more iterations to converge, which makes inference slower: in TSP-SE, $\tau = 0.005$ requires 1,000 iterations to converge and the total inference time for $\tau = 0.005$, Rounding is 37s, which is longer than 19s of $\tau = 0.1$, Beam Search. And the quality of solutions is also not competitive with larger $\tau$ with post-processing. 
  
In summary, the aforementioned further study shows that:
1) Given smaller $\tau$, our {\mname} owns the ability to recover discrete feasible solutions directly.
2) Using a larger $\tau$ with post-processing steps (as done in our main paper) is a cost-efficient choice considering both efficiency and efficacy.

\yanr{
\section{The Feasibility Assumption Explained}
\label{sec:feasibility-assump}
For multi-set Sinkhorn, an assumption is made in Eq.~(\ref{eq:define-z}) that the marginal distributions must have a non-empty feasible region. We explain this assumption with an example derived from positive linear constraints.

As shown in \cref{sec:linsat}, every set of positive linear constraints could be equivalently viewed as a set of Sinkhorn marginals. If we transform the positive linear constraints to Sinkhorn's marginals:
\begin{itemize}
    \item If the positive linear constraints have a non-empty feasible region, e.g.
    \begin{equation}
        x_1 + x_2 \leq 1, x_3+x_4\leq 1, x_1 + x_3 \leq 1, x_2 + x_4 \leq 1,
    \end{equation}
    the corresponding marginals are 
    \begin{subequations}
    \begin{align}
        &\mathbf{u}_1=[1, 1, 0, 0, 1], \mathbf{v}_1^\top=[1, 2] \quad \triangleright\text{for $x_1 + x_2 \leq 1$}\\
        &\mathbf{u}_2=[0, 0, 1, 1, 1], \mathbf{v}_2^\top=[1, 2] \quad \triangleright\text{for $x_3 + x_4 \leq 1$}\\
        &\mathbf{u}_3=[1, 0, 1, 0, 1], \mathbf{v}_3^\top=[1, 2] \quad \triangleright\text{for $x_1 + x_3 \leq 1$}\\
        &\mathbf{u}_4=[0, 1, 0, 1, 1], \mathbf{v}_4^\top=[1, 2] \quad \triangleright\text{for $x_2 + x_4 \leq 1$}\\
    \end{align}
    \end{subequations}
    A feasible solution $x_1=1, x_2=0, x_3=0, x_4=1$ corresponds to a valid transportation plan,
    \begin{equation}
        \mathbf{\Gamma}=\left[\begin{array}{ccccc}
             1 & 0 & 0 & 1 & 0  \\
             0 & 1 & 1 & 0 & 1
        \end{array}\right].
    \end{equation}
    Thus Eq.~(\ref{eq:define-z}) is feasible in such cases.
    
    \item If the positive linear constraints have no feasible region, e.g.\ 
    \begin{equation}
        x_1 + x_2 \geq 2, x_3+x_4\geq 2, x_1 + x_3 \leq 1, x_2 + x_4 \leq 1,
    \end{equation}
    Eq.~(\ref{eq:define-z}) is infeasible. Such infeasible marginals are not expected, and {\mname} will not converge.
\end{itemize}
As a side note, the last column and the second row of $\mathbf{\Gamma}$ are set as separate dummy columns/rows for different marginals in our {\mname}, and their converged values may be different for different marginals. Empirically, {\mname}'s converging property is not affected after adding the dummy columns/rows.
}

\section{Details of Case Study I: Neural Solver for Traveling Salesman Problem with Extra Constraints} \label{sec:detailed_tsp_exp}
\subsection{Hyper-parameters and Implementation} \label{sec:tsp_hyper}
In both TSP-SE and TSP-PRI, we use a 3-layer Transformer to encode the 2-D coordinates into hidden vectors. Then the hidden vectors are projected into the pre-projected matrix using a 3-layer MLP with ReLU activation. Dimensions of hidden states for both the Transformer and the MLP are set to 256, and the head number of multi-head attention in the Transformer is set to 8.

We train the model for 100 epochs for both TSP-SE and TSP-PRI. In each epoch, we randomly generate 256,000 instances as the training set of this epoch. The batch size is set to 1,024. Adam optimizer is used for training and the learning rate is set to 1e-4.

During inference, we use beam search to post-process the continuous matrix $\widetilde{\mathbf{X}}$ output by the network to get the binary matrix $\mathbf{X}$. The width of the beam for beam search is set to 2,048.

Our model runs on a single NVIDIA GeForce RTX 2080Ti GPU with 11GB memory.
\subsection{Baseline Methods}
\paragraph{MIP} MIP methods directly use Gurobi to solve the formulation in Sec.\ref{sec:TSP_fml}, e.g. an integer programming problem with linear constraints and quadratic objective. The time limit per instance is set to 2s/20s.

\paragraph{Nearest Neighbor} Nearest Neighbor is a greedy heuristic for TSP. For TSP-SE, in each iteration, the nearest node (except the ending node) to the starting node is selected as the next node to visit. Then the selected node becomes the new starting node in the next iteration. After all nodes except the ending node are visited, the tour directly connects to the ending node. 

For TSP-PRI, in the $m$-th iteration, if the priority node has not been visited, the priority node will be selected as the next node to satisfy the priority constraint. 

\paragraph{Insertion Heuristic} Insertion Heuristic first uses the starting and ending nodes to construct a partial tour. In each iteration, a new node is selected and inserted to the partial tour to extend it. For a selected node, it is inserted in the position where the tour length increase is minimized. Formally, we use $T = \{\pi_1, \pi_2, \dots, \pi_m\}$ to denote a partial tour with $m (m < n)$ nodes. Assuming the selected new node is $u^*$, then it is inserted behind the $i^*$-th node in the partial tour:
\begin{equation}
    i^* = \argmin_{1 \le i \le m - 1} D_{{\pi_i},u^*} + D_{u^*,{\pi_{i+1}}} - D_{{\pi_i},{\pi_{i+1}}}
\end{equation}
According to the different new node selection processes, there are different variants of insertion heuristic:

\textit{Nearest Insertion} selects the nearest node to the partial tour:
\begin{equation}
    u^* = \argmin_{u \not \in T}\min_{v \in T}D_{u,v}
\end{equation}
\textit{Farthest Insertion} selects the farthest node from any node in the partial tour:
\begin{equation}
    u^* = \argmax_{u \not \in T}\min_{v \in T}D_{u,v}
\end{equation}
\textit{Random Insertion} randomly selects the new node to insert.

To satisfy the priority constraint of TSP-PRI, if the priority node is already the $(m + 1)$-th node in the partial tour, we can not insert new nodes in front of it.

\paragraph{Standard Solver} TSP-SE can be converted to standard TSP by adding a dummy node. The distance from the dummy node to starting and ending nodes is 0 and the distance to other nodes is infinity. Then we can use start-of-the-art methods for standard TSP, i.e. \textit{Gurobi(MTZ)/Concrode/LKH3}, to solve TSP-SE. 

However, converting TSP-PRI to standard TSP is non-trivial, making it hard to use standard solvers to solve TSP-PRI.

\paragraph{Attention Model} \textit{Attention Model}~\citep{kool2018attention} is an RL-based autoregressive model for standard TSP. We modify its decoding process so that it can solve TSP-SE and TSP-PRI. Because TSP-SE ends in the ending node instead of constructing a circle, we use the ending node embedding to replace the first node embedding in the original paper during decoding. For TSP-PRI, if the priority node has not been visited within the first $m-1$ steps, it will be visited in the $m$-th step. The training process and hyper-parameters are the same as our model in Sec.~\ref{sec:tsp_hyper}.

\textit{MIP, Insertion Heuristic, Standard Solver} run on a Intel(R) Core(TM) i7-7820X CPU; \textit{Nearest Neighbor, Attention Model} run on a NVIDIA GeForce RTX 2080Ti GPU with 11GB memory.

\begin{table}[tb]
    \centering
    \caption{Ablation study on normalizing randomly generated matrices on TSP.}
    \begin{tabular}{r|ll}
        \toprule
        Method & TSP-SE & TSP-PRI \\ 
        \midrule
        Random Pre-Projected Matrix & 7.414 (19s) & 7.426 (18s) \\
        Trainable Pre-Projected Matrix & 5.546 (26m51s) & 5.646 (27m37s) \\
        Transformer Feature Matrix & \textbf{3.811} (19s) & \textbf{3.943} (18s) \\
        \bottomrule
    \end{tabular}
    \label{tab:norm_random_TSP}
\end{table}

\subsection{Further Ablation Study}

In the following study, we show that {\mname} can normalize matrices outside a neural network. The study involves two variants of our TSP solver presented in \cref{sec:tsp}:
\textbf{1) Random Pre-Projected Matrix}: Apply our {\mname} to a randomly generated matrix and do beam search over it; \textbf{2) Trainable Pre-Projected Matrix}: Randomly initialize the pre-projected matrix, view the matrix as trainable parameters and use the same training process as in the main paper to optimize it. \textbf{Transformer Feature Matrix} is our TSP solver in \cref{sec:tsp}.

The average tour length and total inference time are shown in \cref{tab:norm_random_TSP}. The Random Pre-Projected Matrix cannot provide useful guidance to beam search, thus its performance is poor. Trainable Pre-Projected Matrix performs better, but it is easy to stick at local optima without global features extracted by the neural network. Moreover, updating the pre-projected matrix requires multiple forward and backward passes of {\mname}, making this method much more time-consuming.

This ablation study proves the feasibility of {\mname} working outside a neural network, while it also shows the necessity of neural networks to get high-quality solutions in an efficient time.

\subsection{Visualizations}
Fig.~\ref{fig:tsp_se_case} and Fig.~\ref{fig:tsp_pri_case} show some route plan cases of TSP-SE and TSP-PRI using different methods. Our {\mname} is able to get near-optimal solutions in a short time, especially for TSP-PRI where getting an optimal solution is very time-consuming.

\begin{figure*}[tb!]
    \centering
    \includegraphics[width=0.24\textwidth, height = 0.24\textwidth]{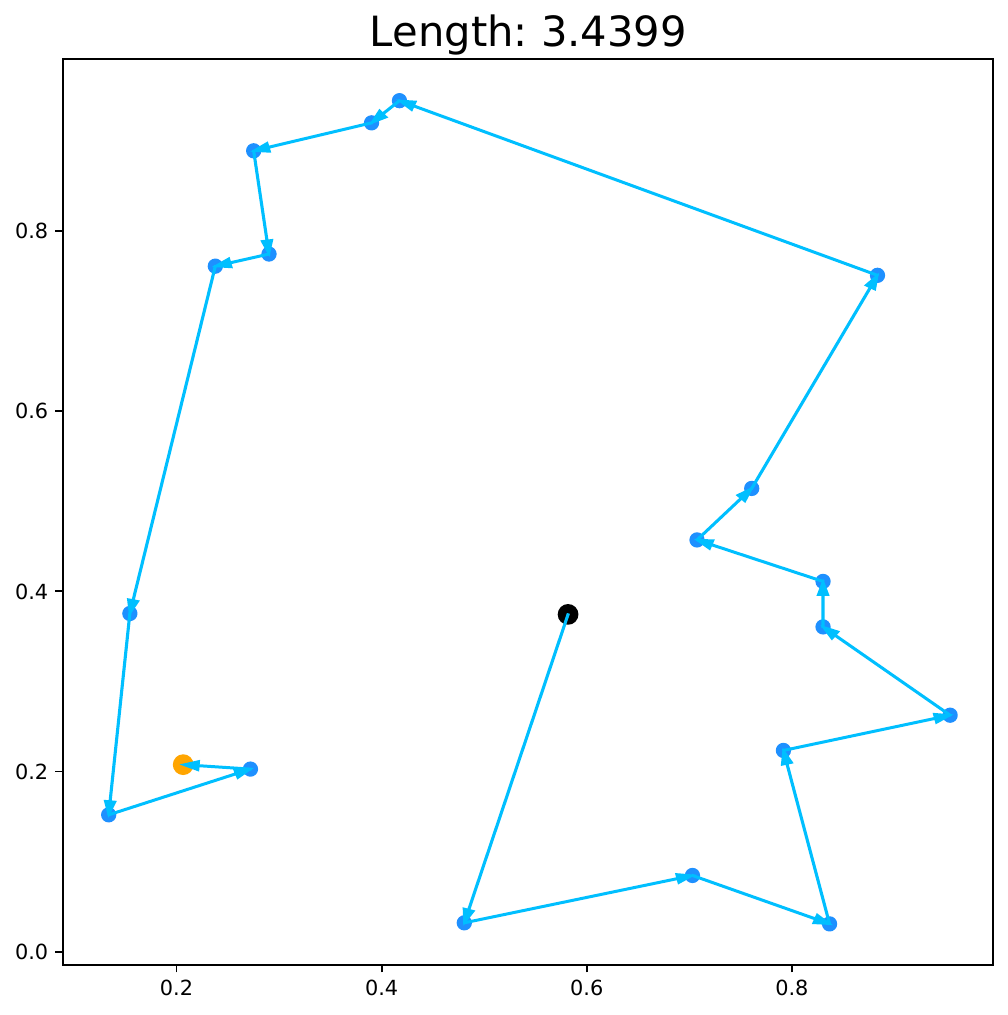}
    \includegraphics[width=0.24\textwidth, height = 0.24\textwidth]{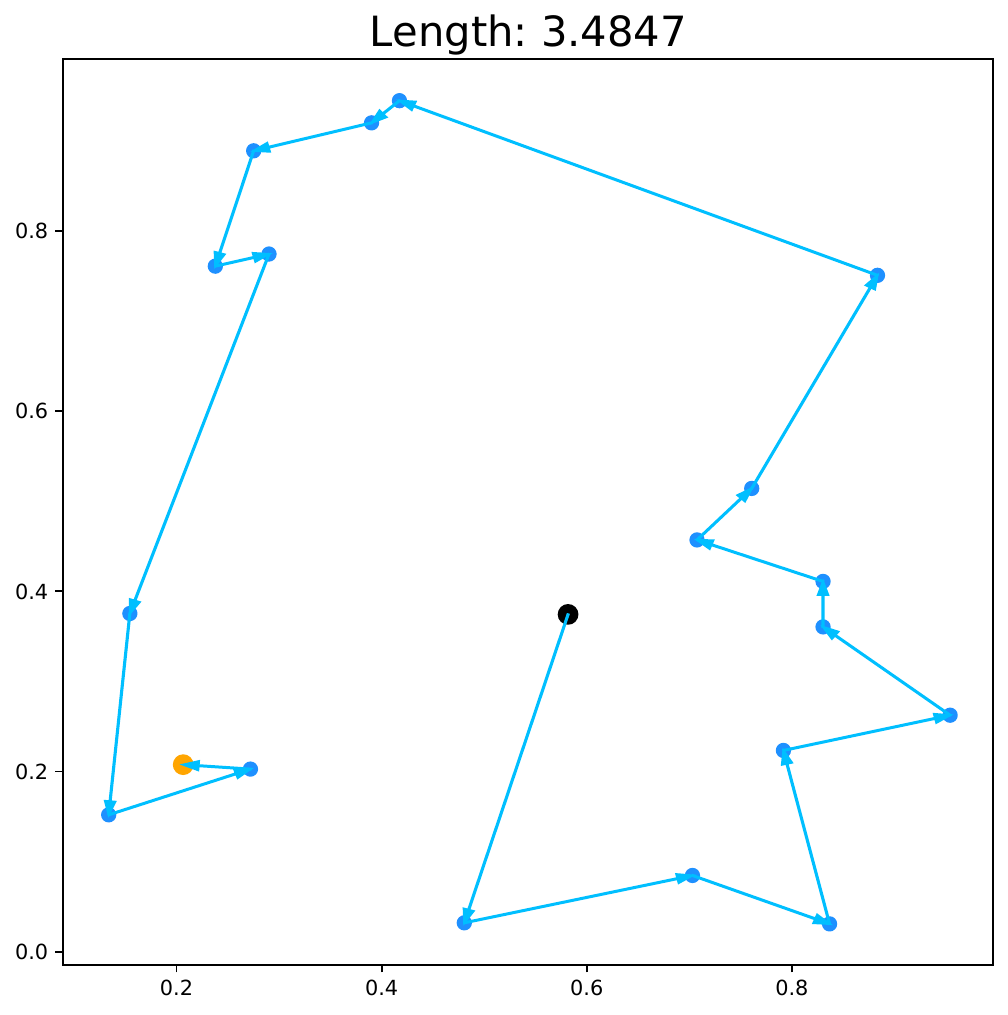}
    \includegraphics[width=0.24\textwidth, height = 0.24\textwidth]{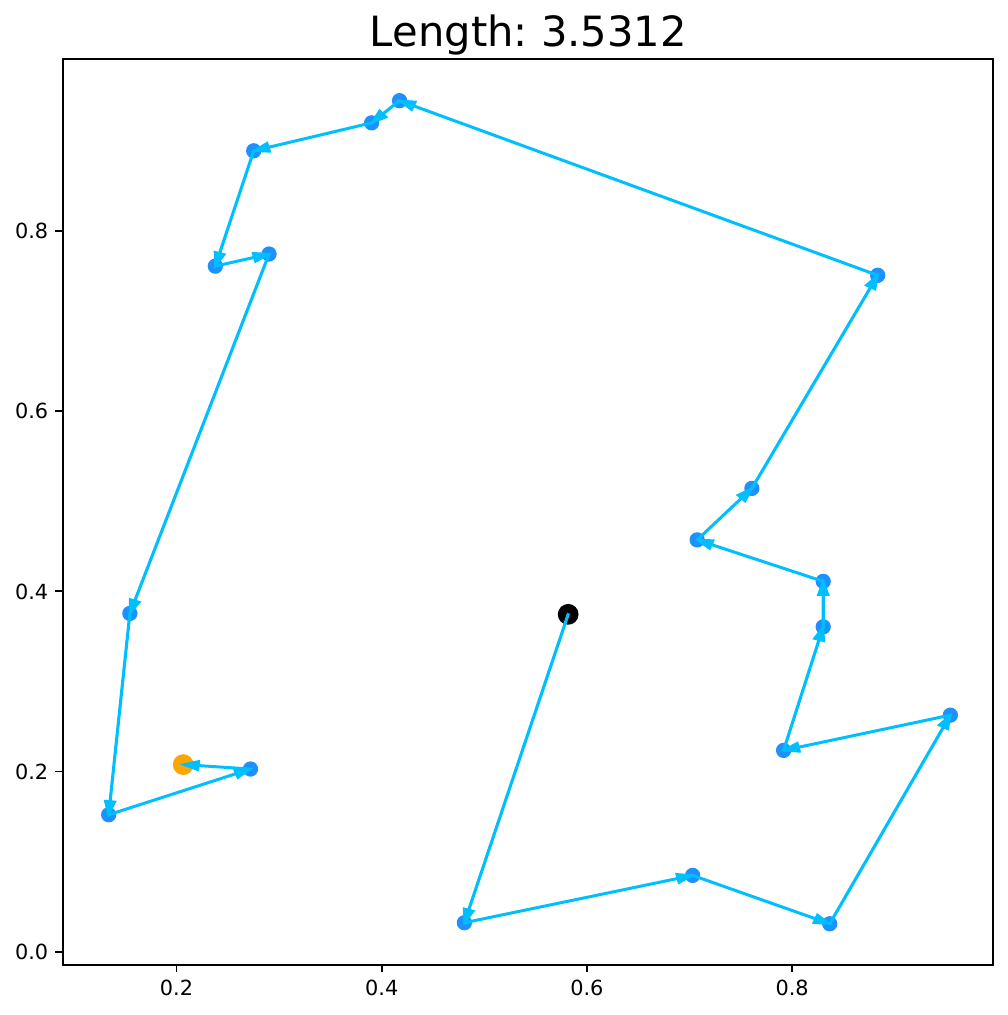}
    \includegraphics[width=0.24\textwidth, height = 0.24\textwidth]{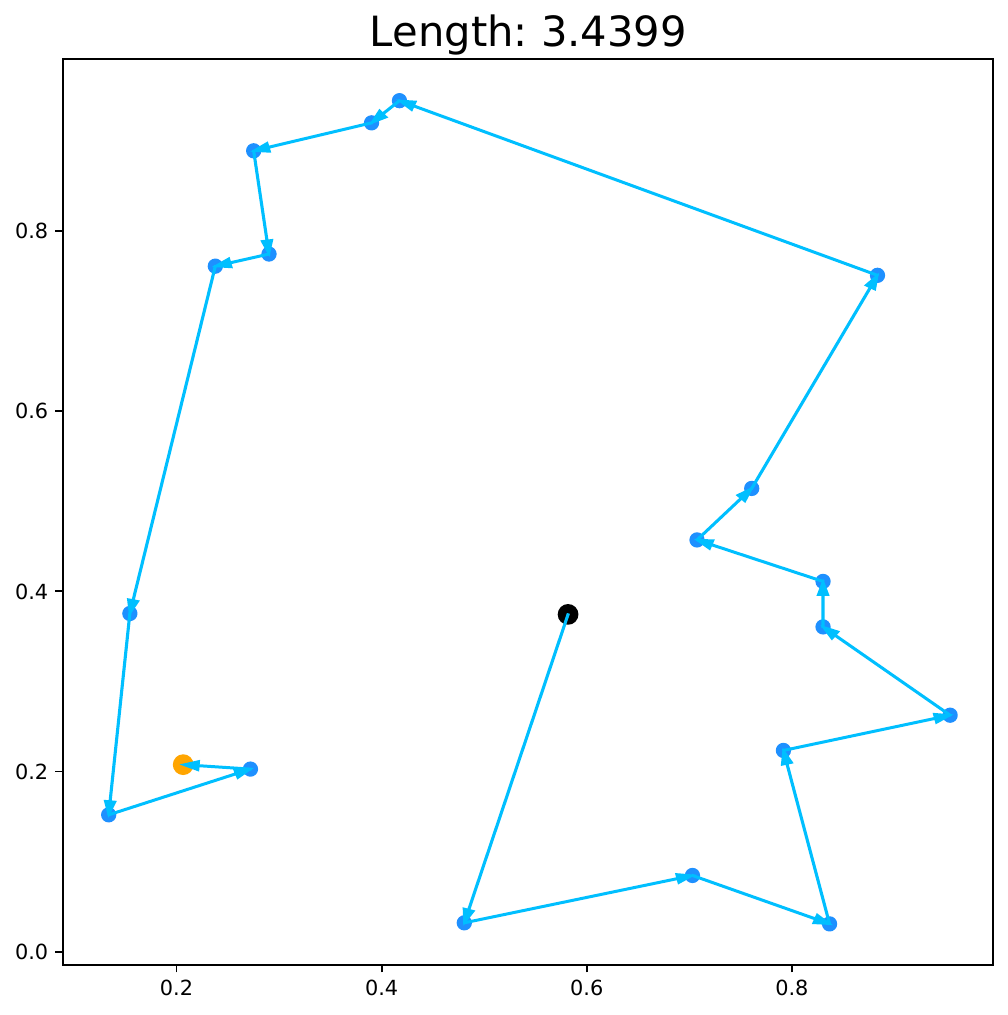}
    \\
    \subfigure[Optimal]{\includegraphics[width=0.24\textwidth, height = 0.24\textwidth]{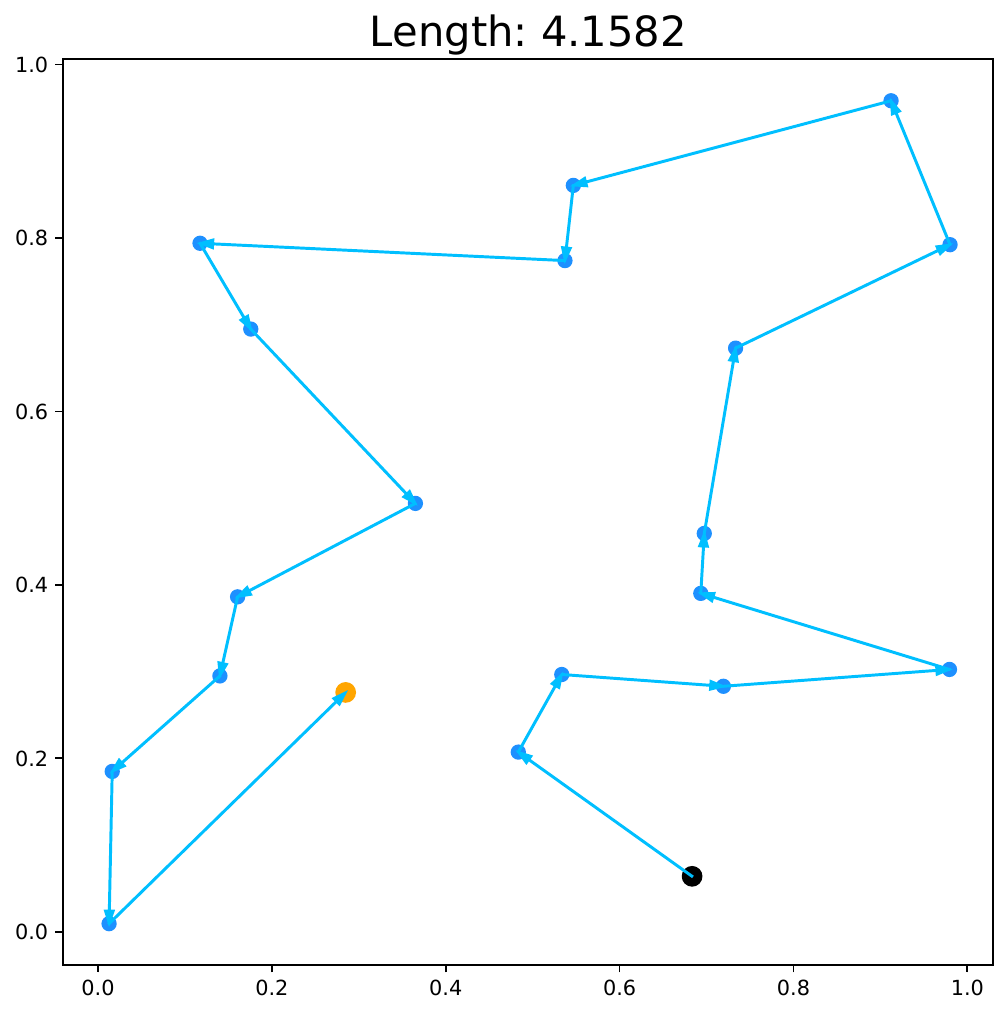}}
    \subfigure[Gurobi (Sec.\ref{sec:TSP_fml}, 10s)]
    {\includegraphics[width=0.24\textwidth, height = 0.24\textwidth]{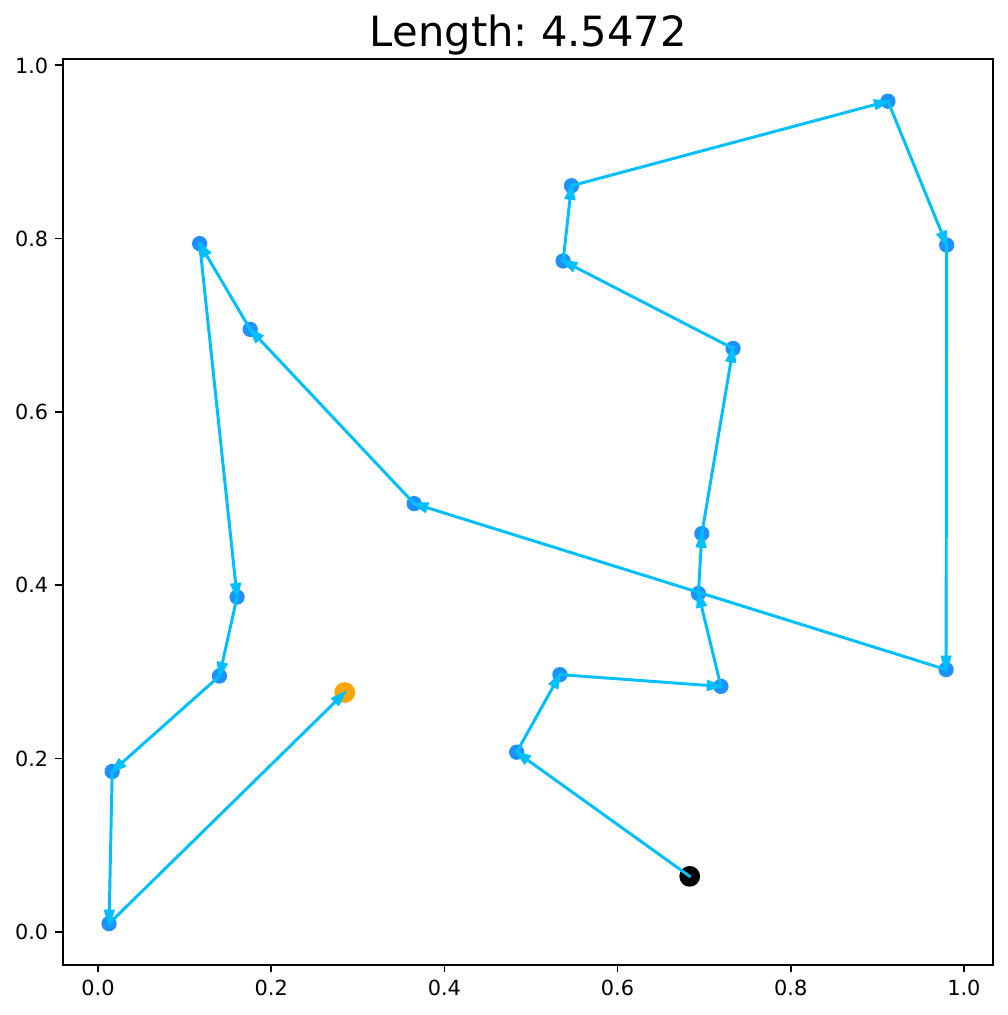}}
    \subfigure[Attention Model]{\includegraphics[width=0.24\textwidth, height = 0.24\textwidth]{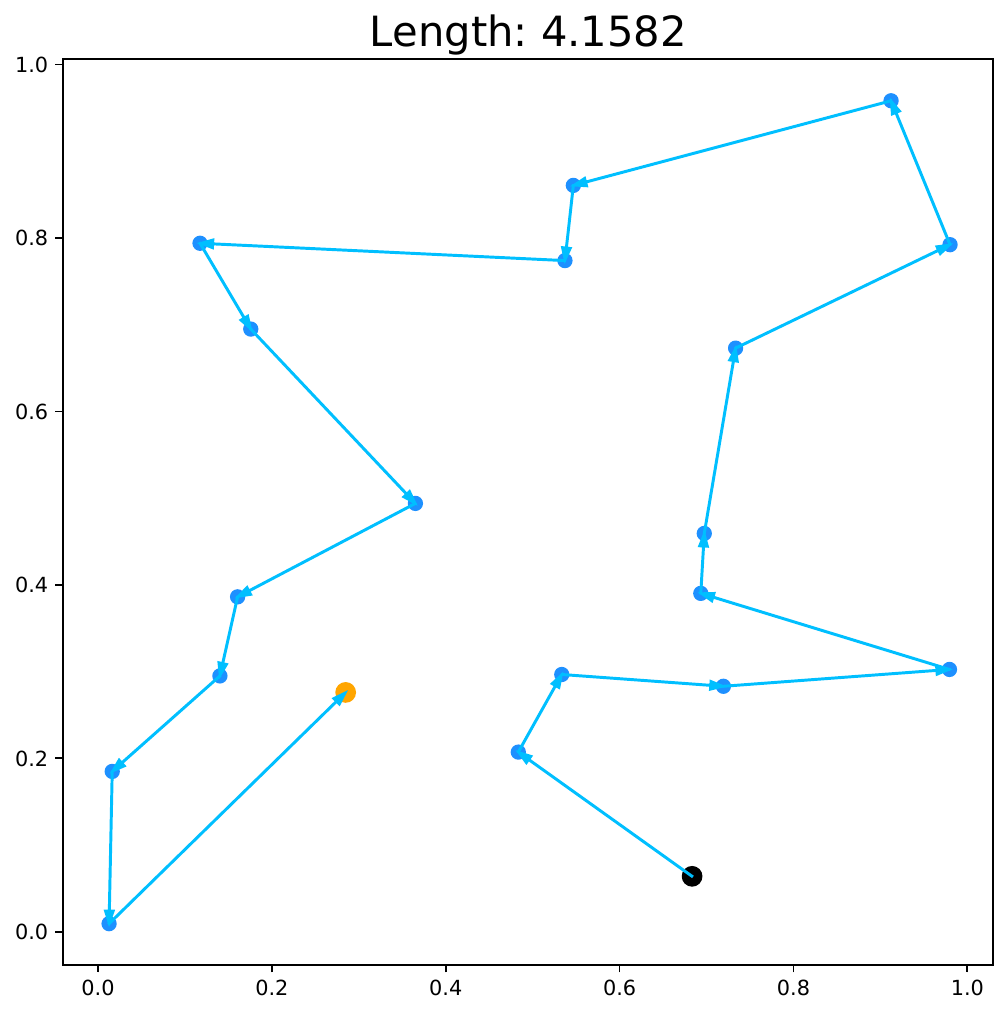}}
    \subfigure[\mname]{\includegraphics[width=0.24\textwidth, height = 0.24\textwidth]{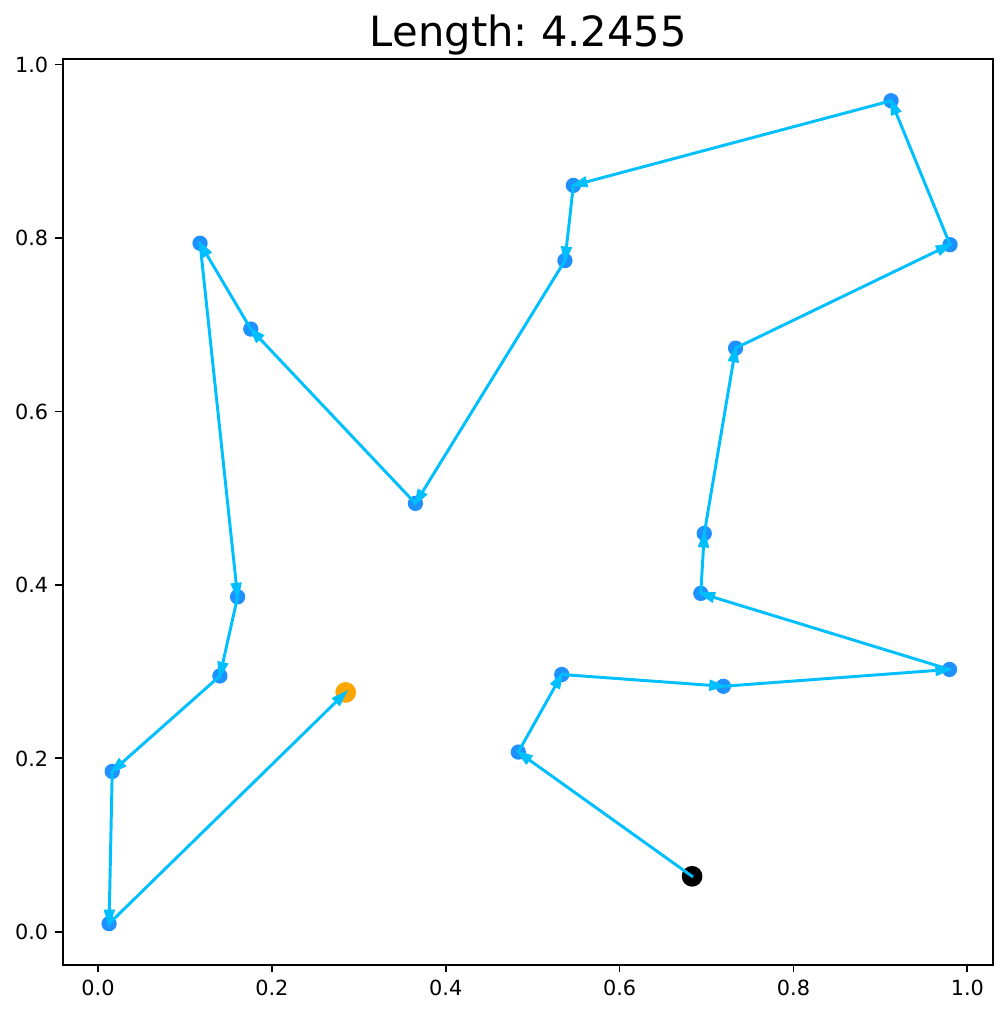}}
    \\
    \caption{Route plan cases of TSP-SE. Each row represents one instance and each column represents one method. Black points stand for starting cities and orange points stand for ending cities. The optimal tours are solved by Gruobi (MTZ).}
    \label{fig:tsp_se_case}
\end{figure*}

\begin{figure*}[tb!]
    \centering
    \includegraphics[width=0.24\textwidth, height = 0.24\textwidth]{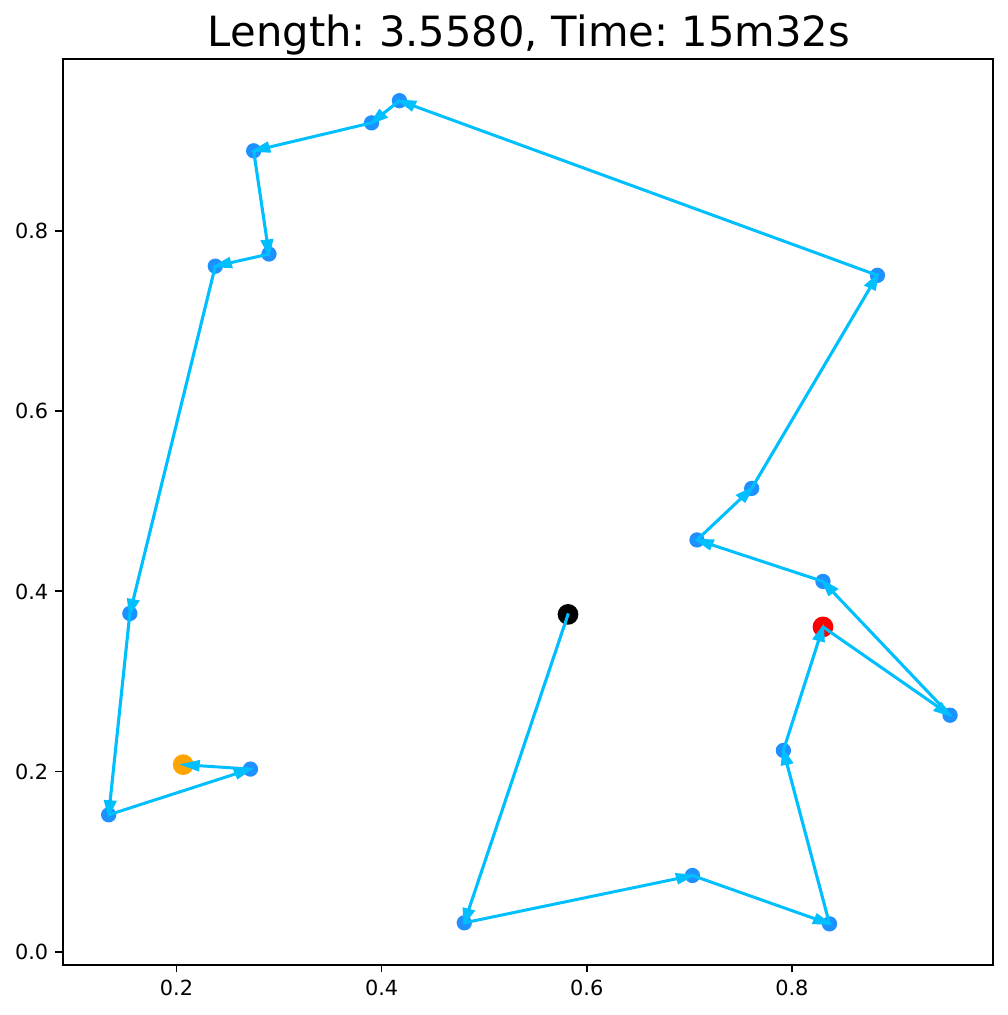}
    \includegraphics[width=0.24\textwidth, height = 0.24\textwidth]{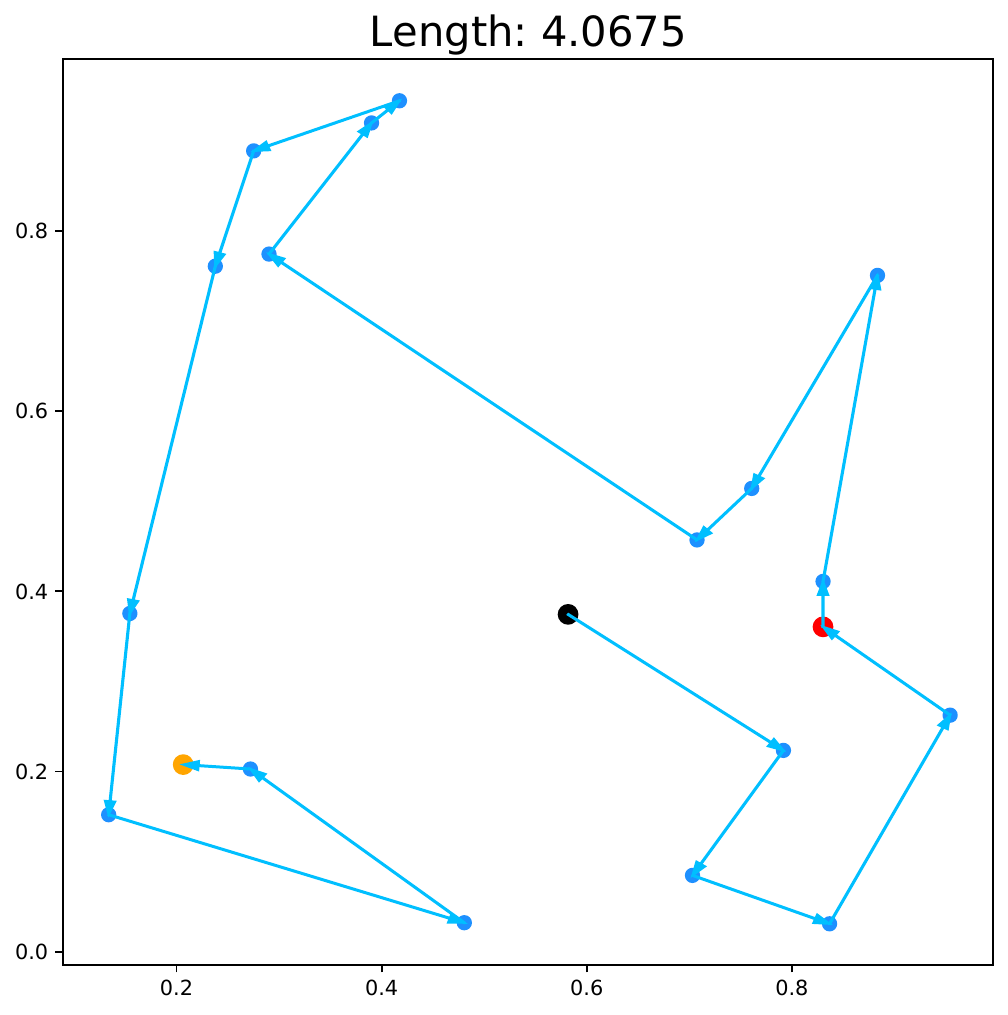}
    \includegraphics[width=0.24\textwidth, height = 0.24\textwidth]{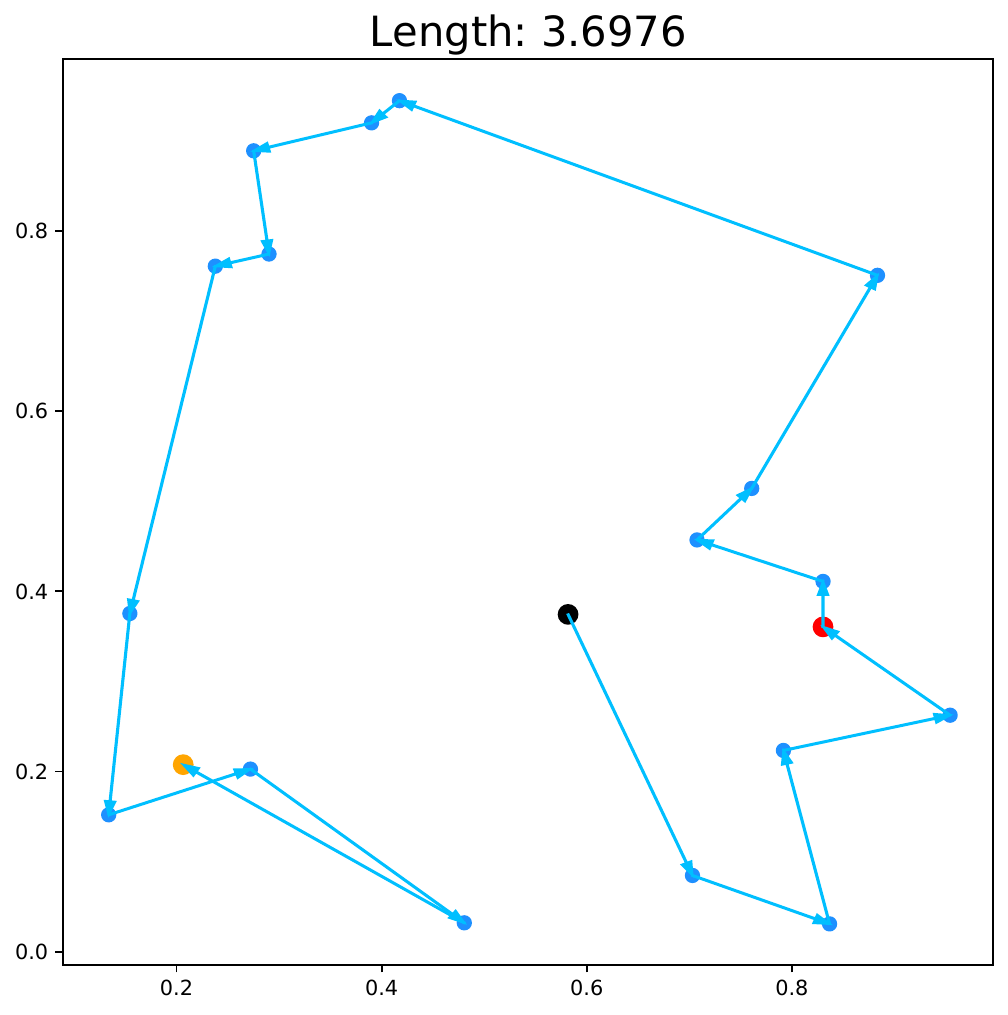}
    \includegraphics[width=0.24\textwidth, height = 0.24\textwidth]{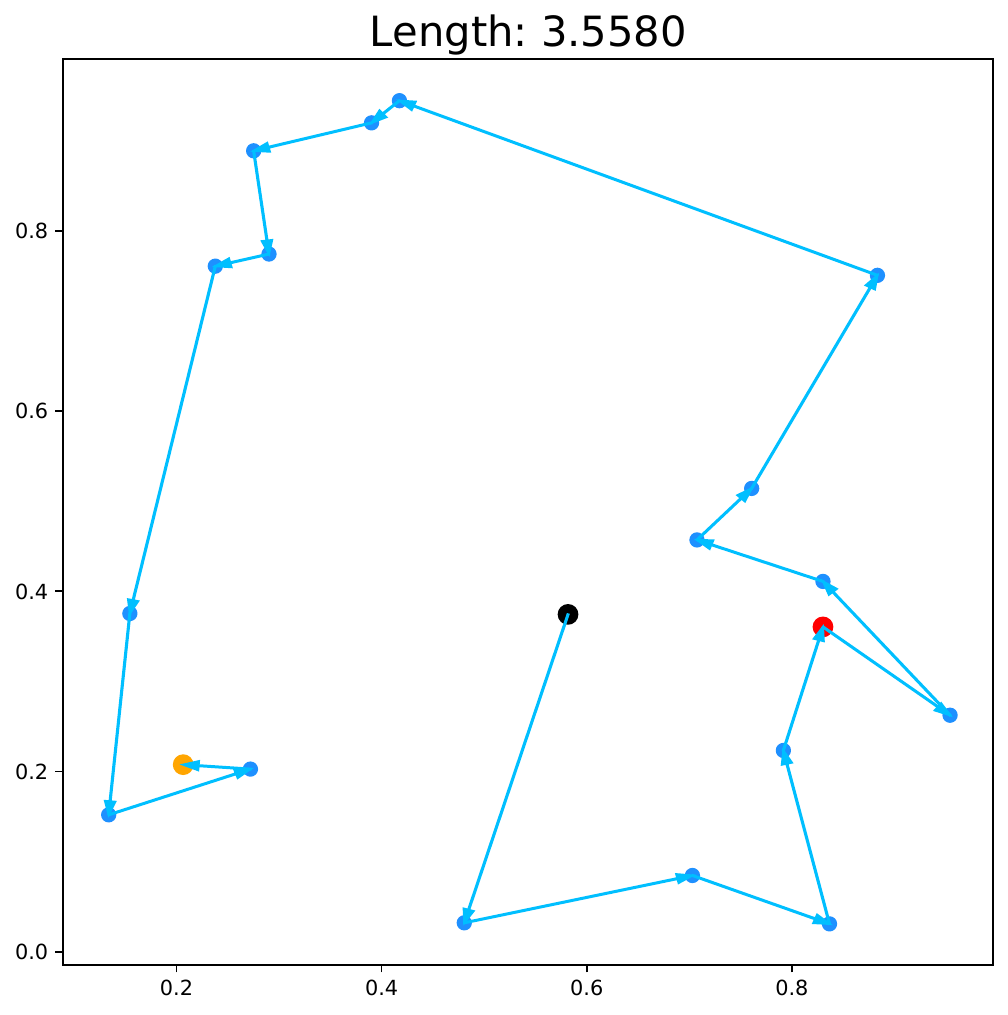}
    \\
    \subfigure[Optimal]{\includegraphics[width=0.24\textwidth, height = 0.24\textwidth]{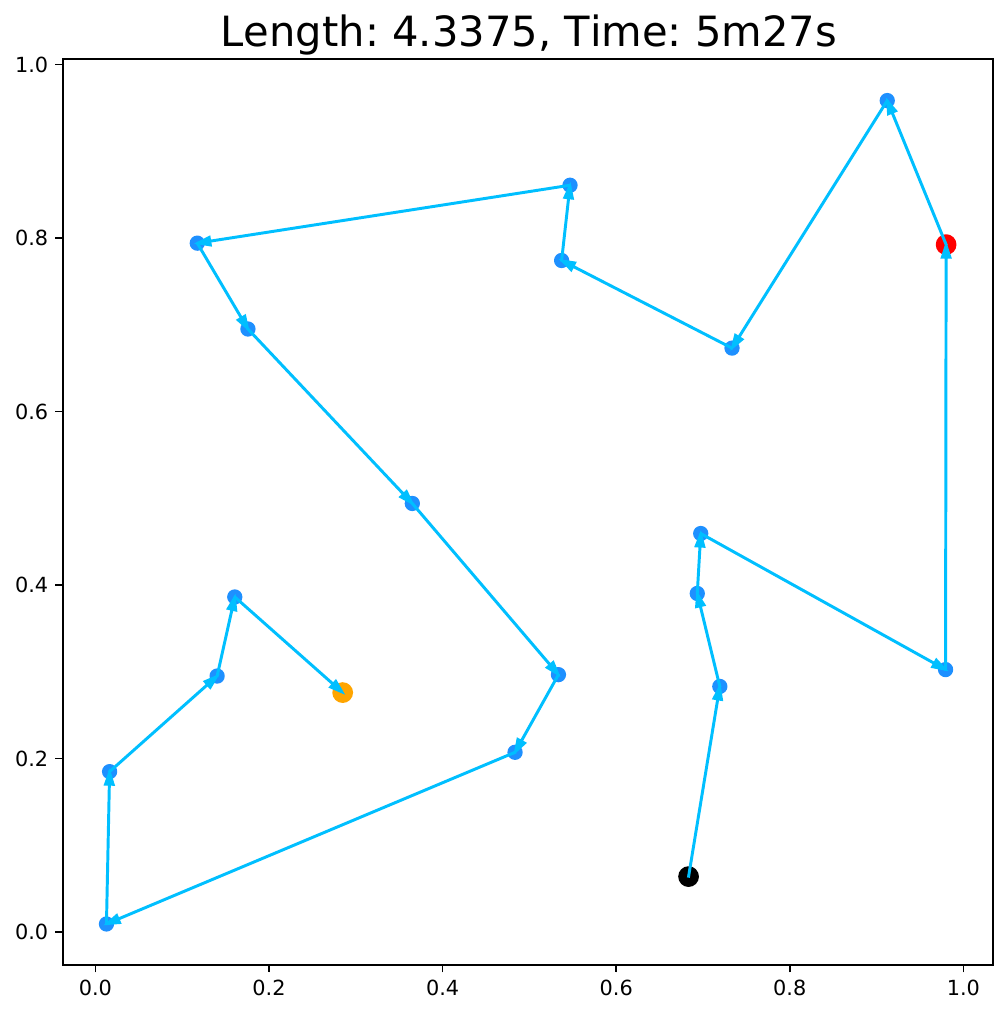}}
    \subfigure[Gurobi (Sec.\ref{sec:TSP_fml}, 10s)]{\includegraphics[width=0.24\textwidth, height = 0.24\textwidth]{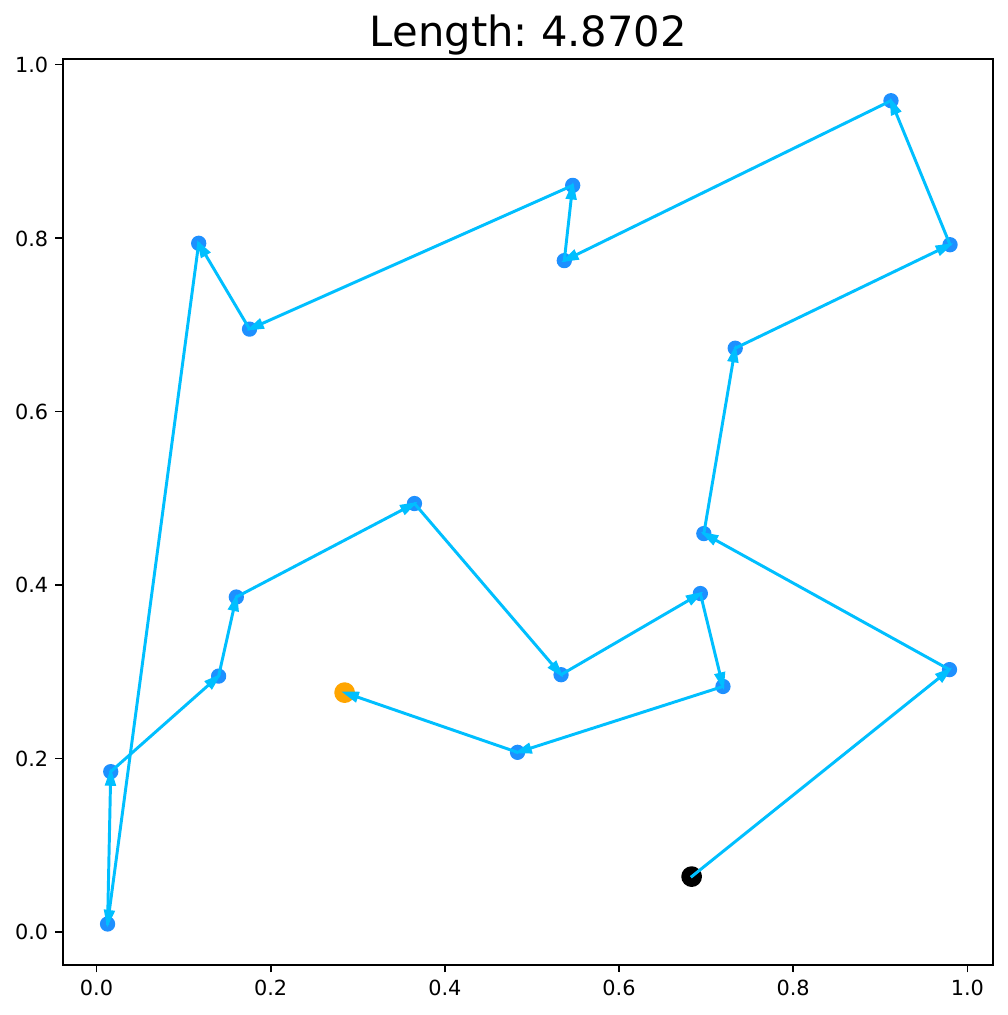}}
    \subfigure[Attention Model]{\includegraphics[width=0.24\textwidth, height = 0.24\textwidth]{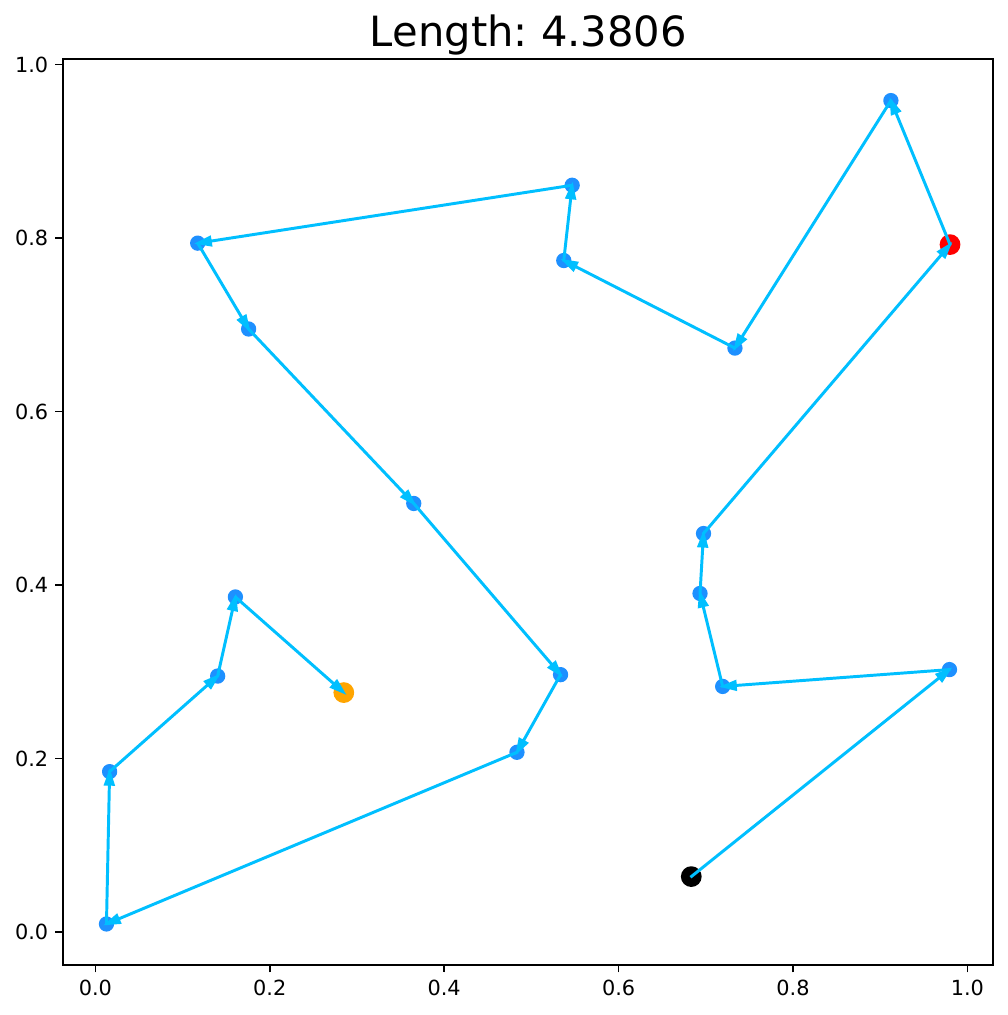}}
    \subfigure[\mname]{\includegraphics[width=0.24\textwidth, height = 0.24\textwidth]{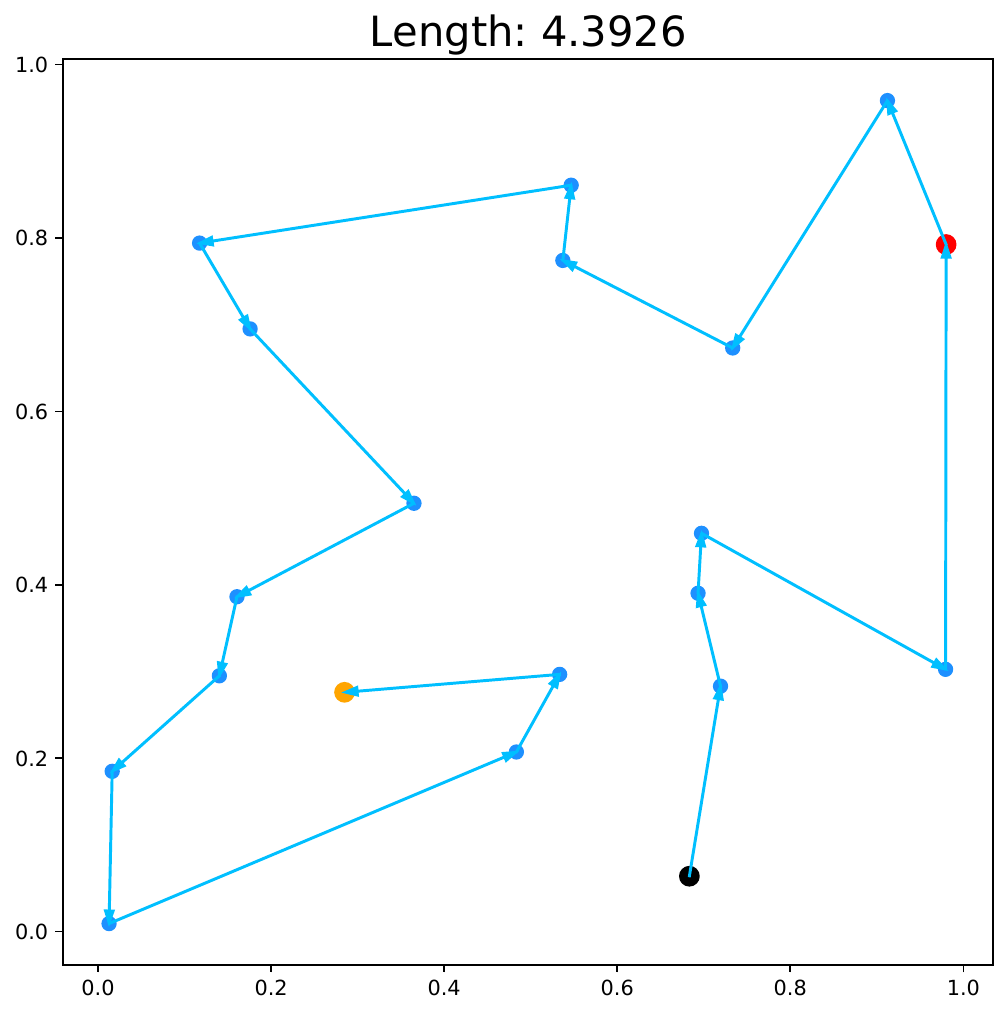}}
    \\
    \caption{Route plan cases of TSP-PRI.  Each row represents one instance and each column represents one method. Black points stand for starting cities, orange points stand for ending cities and red points stand for priority cities that need to be visited within the first 5 steps. The optimal tours are solved by Gurobi (Sec.\ref{sec:TSP_fml}) without a time limit, and the running time to get the corresponding optimal solutions are listed in the titles.}
    \label{fig:tsp_pri_case}
\end{figure*}

\section{Experiment Testbed}
All our experiments are run on our workstation with Intel(R) Core(TM) i7-7820X CPU, NVIDIA GeForce RTX 2080Ti GPU, and 11GB memory.

\begin{table}[tb!]
    \centering
    \caption{Traveling salesman problem with extra constraints' time cost. Similar to other neural solvers for standard TSP~\citep{kool2018attention,kwon2021matrix} the post-processing step (e.g.\ beam search, Monte Carlo tree search) is the most time-consuming, yet it is necessary to achieve better results.}
    \begin{tabular}{r|ccc}
    \toprule
        Task & Neural Network & {\mname} & Beam Search \\
    \midrule
        TSP-SE & 0.3\% & 12.4\% & 87.3\% \\
        TSP-PRI & 0.3\% & 12.6\% & 87.1\% \\
    \bottomrule
    \end{tabular}
    \label{tab:tsp_time}
\end{table}

\begin{table}[tb!]
    \centering
    \caption{Partial graph matching's time cost. Hungarian Top-$\phi$ is our discretization step discussed at the end of \cref{sec:gm_network}.}
    \begin{tabular}{r|ccc}
    \toprule
        Module & Neural Network & {\mname} &  Hungarian Top-$\phi$ \\
    \midrule
        Proportion of Time &  17.1\% & 81.4\% & 1.5\% \\
    \bottomrule
    \end{tabular}
    \label{tab:gm_time}
\end{table}

\begin{table}[tb!]
    \centering
    \caption{Portfolio allocation's time cost. Note that since the decision variables of portfolio allocation are continuous, it does not need any post-processing/discretization steps.}
    \begin{tabular}{r|cc}
    \toprule
        Module & Neural Network & {\mname} \\
    \midrule
        Proportion of Time & 20.1\% & 79.9\% \\
    \bottomrule
    \end{tabular}
    \label{tab:port_time}
\end{table}

\yanr{
\section{Study of Time Costs}
In our case studies, {\mname} has higher time costs than neural networks, which is in our expectation because the cost of one Sinkhorn iteration could be roughly viewed as one layer of neural network from the unfolding perspective. The complexity of neural networks in our case studies is relatively low, since too many layers may cause the over-smoothing issue, and designing new networks is beyond the scope of this paper. We summarize the proportion of inference time in all 3 case studies in \cref{tab:tsp_time,tab:gm_time,tab:port_time}.

Besides, we compare the timing statistics of {\mname} with another regularized projection method -- the differentiable CVXPY layers \citep{AgrawalNIPS19}. Specifically, we conduct a case study of transforming a random matrix into a doubly-stochastic matrix. Both methods achieve doubly-stochastic matrices, and the timing statistics are in \cref{tab:cvxpy_time}. LinSAT is more efficient in this case study. Extra speed-up may be achieved when the input scales up and switching {\mname} to GPU (CVXPY is CPU-only).

\begin{table}[tb!]
    \centering
    \caption{Timing statistics of projecting a random matrix in to doubly-stochastic (on CPU, in seconds).}
    \begin{tabular}{c|ccc}
      \toprule
       Method  & Forward & Backward & Total \\
       \midrule
       {\mname}  & 0.0382 & 0.0250 & 0.0632 \\
       CVXPY   & 0.1344 & 0.0042 & 0.1386 \\
       \bottomrule
    \end{tabular}
    \label{tab:cvxpy_time}
\end{table}
}





\end{document}